\documentclass{article}

\PassOptionsToPackage{numbers}{natbib}

  \usepackage[preprint]{neurips_2026}

\usepackage[utf8]{inputenc} % allow utf-8 input
\usepackage[T1]{fontenc}    % use 8-bit T1 fonts
\usepackage{hyperref}       % hyperlinks
\usepackage{url}            % simple URL typesetting
\usepackage{booktabs}       % professional-quality tables
\usepackage{amsfonts}       % blackboard math symbols
\usepackage{nicefrac}       % compact symbols for 1/2, etc.
\usepackage{microtype}      % microtypography
\usepackage[table]{xcolor}         % colors

\usepackage{graphicx}
\usepackage{xspace}
\usepackage{subcaption}

\usepackage{amsmath}
\usepackage{amssymb}
\usepackage{mathtools}
\usepackage{amsthm}

\usepackage{algorithm}
\usepackage{algorithmic}
\usepackage{bbm}
\usepackage{multirow}
\usepackage{diagbox}
\usepackage{natbib}
\usepackage{wrapfig}
\usepackage{titletoc}
\theoremstyle{plain}
\newtheorem{theorem}{Theorem}[section]

\newtheorem{lemma}[theorem]{Lemma}

\theoremstyle{definition}

\theoremstyle{remark}

\newcommand{\alp}{WR-Offline\xspace}
\newcommand{\efalp}{WR-Online\xspace}

\newcommand{\optiroute}{WISERouter\xspace}

\definecolor{methodshade}{RGB}{239,239,250}

\newcommand{\eat}[1]{}

\title{\optiroute: LLM Routing with Workload Budget Constraint}

\author{%
  Yifei Li\thanks{Corresponding author: \texttt{yfli@cs.ubc.ca}} \quad
  Zihui Gao \quad
  Laks V.S. Lakshmanan \\
  The University of British Columbia
}

\begin{document}

\maketitle

\begin{abstract}
  Large language models (LLMs) achieve impressive performance across multiple domains, but using the most capable model for every query is prohibitive at scale. LLM routing exploits diversity in model capability and cost by assigning each query to a suitable model to balance utility and budget. Current methods have two limitations: (i) they either use heuristics that do not always enforce the budget constraint or impose a fixed \textit{per-query} budget that cannot adapt across the workload and leads to suboptimal performance;  (ii) they require supervised learning on a dense dataset with statistics for every query–model pair, which is expensive to collect. To address these challenges, we formulate LLM routing as a constrained contextual multi-armed bandit problem and introduce WISERouter (WR for short), a framework that supports offline learning from historical interactions as well as online learning with exploration. We further prove that WR-Online achieves a sublinear regret bound of $O(\sqrt{T})$ over a time horizon $T$. Empirical results on RouterBench and SWE-Bench demonstrate that (i) WR-Offline surpasses existing baselines in performance under a fixed budget and adheres more closely to budget constraints, and (ii) WR-Online achieves comparable performance to the baselines, while using substantially less exploration data. 
\end{abstract}

\section{Introduction}

While state-of-the-art large language models (LLMs) continue to advance their remarkable capabilities, their token-wise inference cost often makes them prohibitively expensive for high-throughput workloads. Leading AI services can charge \$100 per user per month~\cite{claude_team_pricing} under business subscription; at the API billing level, frontier models like GPT-5.5 cost up to 6.7$\times$ more per token than smaller models like GPT-5.4-mini~\cite{openai_api_pricing}, pushing organizations or businesses that process high-throughput workload queries -- such as customer support ticket resolution or document summarization -- to inference bills exceeding thousands of dollars per month. It has been found that smaller models can often serve as effective proxies, delivering comparable performance at a fraction of cost~\cite{hybridllm, routellm}. Furthermore, various benchmark leaderboards~\cite{chatbotArena24,livebench25} and empirical studies reveal that different LLMs exhibit varying strengths across different query types~\cite{mixLLM2025, HELM23}, making inference-time model selection attractive for both cost saving and quality improvement. 

Inference-time model selection has emerged as a practical approach to this challenge. The two dominant paradigms are LLM routing~\cite{hybridllm, routellm}, which assigns each query directly to the most suitable model, and cascading~\cite{frugalgpt2024, cascadeMixoT24}, which queries models sequentially and may invoke multiple models per query until a quality threshold is met. 
Recent work has validated inference-time LLM selection across diverse benchmarks~\cite{routerbench, mixLLM2025, zooter, frugalgpt2024}, demonstrating consistent performance-cost trade-offs. Despite this promise, current inference-time model selection methods still share two limitations. 

First, they \textit{do not directly enforce a workload-level budget} constraint. Majority of prior works either enforce no hard budget (e.g., \cite{hybridllm,routellm,zooter}) or constrain cost only per-query (e.g., \cite{frugalgpt2024, UnifiedApproachRouting2025, metallm}). 
However, real-world deployments operate under \textit{workload-level} resource constraints: API token quotas or monthly subscription fees apply across an entire set of queries, not individually. 
LLM selection algorithms that directly reason at the workload-level can prioritize complex or high utility queries for larger and expensive models while being conservative on easier queries, something that per-query constrained algorithms cannot do by construction.

Second, most algorithms need \textit{expensive supervision signals.} Existing inference-time LLM selection methods typically follow a train-deploy paradigm which requires dense supervision: outputs and costs must be collected for every query-model pair before training a routing policy~\cite{hybridllm, routellm}, which becomes prohibitively expensive as the number of model candidates or training queries increases. 
We focus on \textbf{LLM routing} because routing each query to exactly one model avoids the redundant inference costs of cascading, enabling tighter budget constraint. LLM Routing maps naturally to the multi-armed bandit setting (Section~\ref{subsec:formulation}). These limitations lead to two central questions:
\textit{(1) How can we perform LLM routing w.r.t. a workload-level budget constraint? (2) How can routing be learned when query-model interactions are sparse or are gradually being observed?}  
\begin{figure}[ht]
    \centering
    \includegraphics[width=\columnwidth]{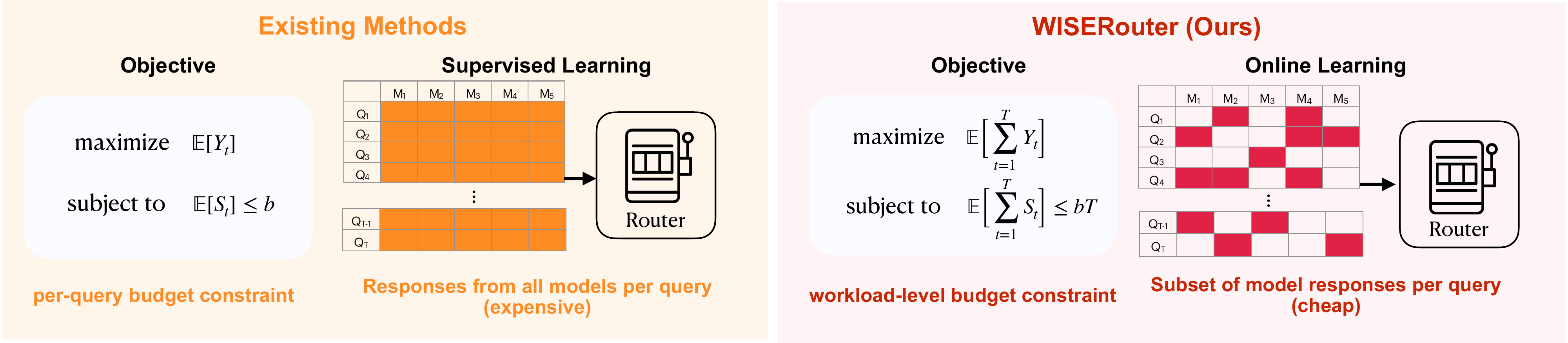}
    \caption{Comparison of \optiroute with previous LLM selection methods.}
    \label{fig:routing_comparison}
\end{figure}

We introduce \textbf{\optiroute} (\textbf{W}orkload-\textbf{I}nformed \textbf{S}equential \textbf{E}fficient \textbf{Router}, WR for short), a unified framework that addresses both limitations by formulating LLM routing as a Constrained Contextual Bandit (CCB)~\cite{resourcefulbandit} problem, where each query embedding is the context, the selected LLM is the action, and the observed response quality is the reward. The goal is to maximize cumulative expected reward while satisfying a workload-level expected cost budget over the horizon. \optiroute addresses the first limitation of enforcing a workload-level budget via Adaptive Linear Programming (ALP)~\cite{ucb-alp}, which translates the remaining budget into a per-round constraint at each query step, enabling dynamic budget allocation across queries by resolving a lightweight linear program without retraining. To address the second limitation and learn from sparse supervision signal, we propose \efalp, which jointly learns reward and cost statistics online via an $\epsilon$-first exploration before the ALP-based exploitation, all under a single shared budget. We prove the end-to-end regret bound for \efalp, covering both phases under a workload level budget constraint. Figure~\ref{fig:routing_comparison} illustrates how \optiroute differs from prior methods along these two dimensions.

Concretely, \optiroute operates in two stages: query embedding discretization followed by ALP-based model selection, with \alp for data-rich settings (e.g., responses from all candidate models collected per query) and \efalp for data-scarce settings (e.g., responses from only a subset of candidate models per query). On datasets RouterBench~\cite{routerbench} and SWE-Bench~\cite{swebench}, \alp consistently matches or outperforms the baselines. 
\alp improves average quality by $14\%$ over the best baseline on SWE-Bench at the tightest budget.
\alp terminates with the lowest unspent budget across most budget levels without overspending, training over $5,900\times$ faster than neural-network based baselines for each new budget setting. In the data-scarce setting, \efalp matches offline methods while reducing the required training data by $90\%$ on RouterBench.

Our contributions are: (i) To the best of our knowledge, we are the first to formulate workload-constrained LLM routing as a CCB problem and develop \alp, which enforces a global budget via solving ALP and adapts to new budget constraints without retraining. (ii) We propose \efalp, an online variant that jointly learns reward and cost statistics from sparse interactions, and prove the first end-to-end regret bound for bandit algorithm applied on LLM routing covering both exploration and exploitation under a shared workload budget. (iii) We empirically validate \optiroute on RouterBench and SWE-Bench, demonstrating superior performance at tight budgets, precise budget utilization, and significant reduction in training data with \efalp.

\section{Related Work}
\label{sec:related_work}

Efficient LLM inference is a central challenge in large-scale AI deployment, with recent work spanning algorithmic improvements such as speculative decoding~\cite{speculativedecode, speculativedecode2}, pruning~\cite{modelpruning1, modelpruning2}, quantization~\cite{quantization1, quantization2}, efficient KV caching~\cite{pyramidkv}, and others~\cite{s3, moe1, moe2}. Orthogonally to these techniques which have access to model internal architectures, inference-time model selection reduces inference cost by routing queries to black-box models of varying capability and price~\cite{hybridllm,routellm,zooter}, cascading through a fixed model sequence~\cite{frugalgpt2024,UnifiedApproachRouting2025}, or ensembling multiple outputs~\cite{llmblender,thriftllm}. Below, we focus on the two directions most directly related to \optiroute; a full survey appears in Appendix~\ref{app:related_work}.

\noindent\textbf{Efficient LLM Selection with Cost Constraint.}
Prior work on cost-aware LLM selection has addressed budget constraints in different ways, but none jointly enforces a workload-level budget across both learning and deployment. MetaLLM~\cite{metallm} treats model selection as a contextual bandit, using UCB~\cite{ucb} to satisfy per-query cost constraint by tuning a cost scaling factor in reward function. FrugalGPT~\cite{frugalgpt2024} and CascadeRouting~\cite{UnifiedApproachRouting2025} cascade queries through increasingly capable models until a quality threshold is met, enforcing an explicit per-query budget. Such per-query constraints can be myopic, and adapting these methods to different cost constraints typically requires retraining the routing policy. PILOT~\cite{PILOT2025} combines a LinUCB router with an online knapsack solver for budget enforcement, but applies the budget only during deployment: specifically, the exploration phase with LinUCB is unconstrained, and the regret analysis covers router learning alone rather than the full stages. \efalp, the online variant of \optiroute, is the first LLM routing method to enforce a single workload-level budget across both exploration and exploitation, and to provide an end-to-end regret bound covering the complete online routing process.

\noindent\textbf{Constrained Contextual Bandits.}
\citet{resourcefulbandit} establishes the theoretical foundation for budget-constrained contextual bandits with an $\tilde{O}(\sqrt{T})$ regret bound, but the proposed algorithm is computationally intractable in practice. \citet{ucb-alp} proposes UCB-ALP for uniform-cost settings and $\epsilon$-first ALP for heterogeneous costs setting, both assuming cost statistics are known a priori. Our work extends the $\epsilon$-first ALP algorithm to the regime where both reward and cost must be estimated online from sparse interactions under a single budget shared across exploration and exploitation.
\section{Preliminaries}

\subsection{LLM routing as Constrained Contextual Bandit}
\label{subsec:formulation}

We consider a routing setting with $K$ black-box LLMs, indexed by $\mathcal{K}=\{1,\ldots,K\}$, and a stream of user queries $\mathcal{Q}$. At each time step $t$, an input prompt $q_t \in \mathcal{Q}$ arrives and the router selects a single model $A_t\in\mathcal{K}$ to generate the response.
After invoking the selected model, the router can observe:
(i) a performance score $Y_t\in[0,1]$ that quantifies response quality (e.g., human preference, ground-truth-based metric when available), and
(ii) a cost score $S_t \in [0,1]$ that measures the resources used by the selected model, normalized by a known per-query maximum. In this work, $S_t$ is the monetary cost computed by token-based pricing of the selected LLM. The same formulation also applies to other cost notions, such as latency or energy consumption. 

We formulate LLM routing as a Constrained Contextual Bandit (CCB) problem. Each incoming query provides a context, represented by its query embedding (detailed in Section~\ref{subsec: clustering}), and each LLM selection is an action. The router receives bandit feedback 
$(Y_t,S_t)$ only for the selected model at timestep $t$.
Our goal for a routing policy $\Psi$ is to maximize expected cumulative performance subject to a workload-level budget:
\[
\text{maximize}\quad \;
       U_{\Psi}(T,B) = \mathbb{E}\Bigl[\sum_{t=1}^T Y_t\Bigr]
\quad \text{subject to} \quad
\mathbb{E}\Bigl[\sum_{t=1}^{T} S_t\Bigr]\le B.
\]
We optimize expected reward subject to expected cost to account for stochasticity in model generation, following prior work on efficient LLM selection~\cite{frugalgpt2024,UnifiedApproachRouting2025}.

\subsection{Adaptive Linear Programming}
\label{subsec:alp-prelim}
\citet{ucb-alp} propose ALP for constrained contextual bandits over a finite horizon $T$: at each step it solves a linear program that converts the remaining budget $b$ across the remaining $\tau$ rounds into a per-round average constraint $b/\tau$, adapting to cost consumption at runtime. It operates under two assumptions: (a) a \emph{finite context space} $\mathcal{J} = \{1,\ldots,J\}$ with known context distribution 
% $\pi_j = \mathbb{P}(X_t = j)$
$\{\pi_j\}_{j=1}^J$
; and (b) \emph{known reward and cost statistics} $u_{j,k}$ and $c_{j,k}$, denoting the expected reward and cost for each context-action pair $(j,k)$.
Let $p_{j,k} \in [0,1]$ denote the probability of selecting action $k$ given context $j$.
A null action $k=0$ with $u_{j,0}=c_{j,0}=0$ (skipping the current context) guarantees a valid action always exists.   
At each time step, ALP solves: 
\begin{align}
(\mathrm{LP}_{\tau,b})\quad
\max_{p_{j,k}} \quad
& \sum_{j=1}^{J}\pi_j \sum_{k=1}^{K} p_{j,k}\,u_{j,k} \label{eq:lp-obj} \\
\text{s.t.}\quad
& \sum_{j=1}^{J}\pi_j \sum_{k=1}^{K} p_{j,k}\,c_{j,k} \;\le\; \frac{b}{\tau},
    \label{eq:lp-budget} \\ 
    & \sum_{k=1}^{K} p_{j,k} \;\le\; 1,\quad \forall\,j\in\mathcal{J}, \label{eq:lp-prob} \\ 
    & p_{j,k} \in [0,1]. 
  \end{align} 
 ALP was designed for finite contexts with known statistics. Neither assumption holds directly in LLM routing: query embeddings are continuous and high-dimensional, and reward and cost statistics are not known a priori and must be estimated from data. Section~\ref{sec:method} addresses both limitations. 

\section{Our Method}
\label{sec:method}

\optiroute (Figure~\ref{fig:workflow}) addresses the two limitations identified in Section~\ref{subsec:alp-prelim} in two steps: (1) discretize the query embedding space into a finite context set compatible with ALP (Section~\ref{subsec: clustering}); and (2) perform ALP-based model selection under two regimes - \alp (Section~\ref{subsec:optiroute-alp}), which estimates statistics from historical data, and \efalp (Section~\ref{subsec:efalp}), which learns them online under a shared workload budget. We close with a discussion of the assumptions of our formulation. 

\begin{figure}[ht]
    \centering
    \includegraphics[width=0.9\columnwidth]{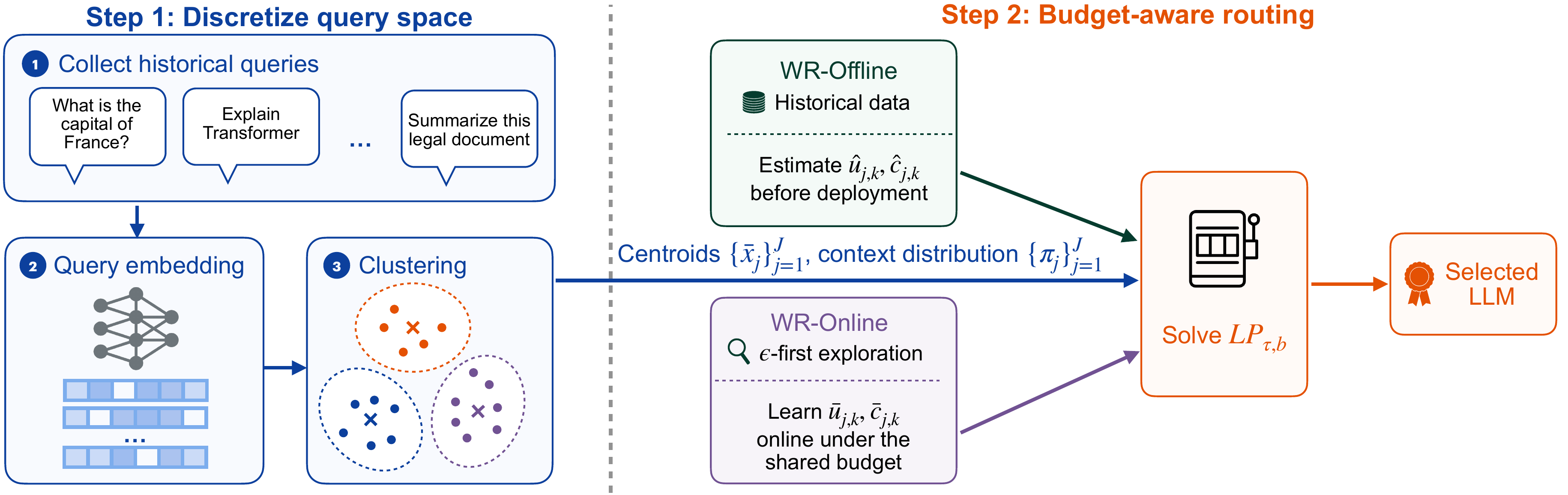}
    \caption{Overview of \optiroute workflow.}
    \label{fig:workflow}
\end{figure}

\subsection{Context Discretization}
\label{subsec: clustering}
For the CCB formulation to be meaningful, the context must encode query-level features that explain the variation in LLM performance across queries. Since different LLMs can perform unevenly across query semantics~\cite{mixLLM2025}, we use a pretrained encoder $g(\cdot)$ to map each query $q\in\mathcal{Q}$ to an embedding vector $x=g(q)\in\mathbb{R}^d$. However, ALP assumes a finite context space with an explicit context distribution, so we discretize the embedding space via clustering. Specifically, given a historical query set $\mathcal{Q}=\{q_i\}_{i=1}^n$, we compute embeddings $x_i=g(q_i)$ and cluster $\{x_i\}_{i=1}^n$ into $J$ groups with centroids $\{\bar{x}_j\}_{j=1}^J$. 
We define the discrete context of query $q_i$ as its cluster assignment $\sigma(g(q_i))\in\{1,\dots,J\}$, and estimate the context distribution empirically by $\pi_j=\mathbb{P}(X=j)=|\{i\in[n]: \sigma(g(q_i))=j\}|/n.$
At test time, each incoming query $q_t$ is embedded as $x_t=g(q_t)$ and assigned to its nearest centroid, yielding a discrete context $X_t\in\{1,\dots,J\}$. We assume that for each LLM, queries in the same cluster share the same expected reward and expected cost. The granularity of this approximation is controlled by $J$. Ablations in Appendix~\ref{app:ablation} show stable performance across cluster counts and algorithms, with small intra-cluster reward and cost variance (Appendix~\ref{app:cluster_var}).

\subsection{\alp with Known Statistics}
\label{subsec:optiroute-alp}
\alp addresses the data-rich regime, where a historical dataset $\mathcal{D}_{\mathrm{hist}}$ of query-model interactions is available. Applying the discretization from Section~\ref{subsec: clustering} to $\mathcal{D}_{\mathrm{hist}}$, we estimate $\hat u_{j,k}$ and $\hat c_{j,k}$ as the mean reward and cost for each context-action pair $(j,k)$. These statistics are computed once offline and fixed at deployment. At each routing step, the router solves $\mathrm{LP}_{\tau,b}$ via a standard LP solver with negligible additional latency and samples an action from solution probability $p_{j,\cdot}(b/\tau)$ across all models under context $j$ at per-round budget $b/\tau$. Since the budget constraint is on expected cost, remaining budget $b$ is updated by mean cost. The full procedure is given in Algorithm~\ref{alg:offline-alp}. 

    \begin{algorithm}[H] % [H] forces it to stay HERE
\caption{\textsc{\alp}}
\label{alg:offline-alp}
      \begin{algorithmic}[1]
\REQUIRE Horizon $T$, workload budget $B$, model set $\mathcal{K}$, encoder $g$, clustering map $\sigma:\mathbb{R}^d\!\to\!\{1,\dots,J\}$, context distribution $\{\pi_j\}_{j=1}^J$, historical data $\mathcal{D}_{\mathrm{hist}}$

\STATE \textbf{Init:} $\tau\gets T$,  $b\gets B$; for each $(j,k)$, compute $\hat u_{j,k}$ and $\hat c_{j,k}$ from $\mathcal{D}_{\mathrm{hist}}$

\FOR{$t=1$ \TO $T$ and $b>0$}
        \STATE Observe query $q_t$; assign context $j\gets \sigma(g(q_t))$
        \STATE Solve $\mathrm{LP}_{\tau,b}$ using $(\hat u_{j,k},\hat c_{j,k})$ and $\{\pi_j\}_{j=1}^J$; Obtain solution probabilities $p_{j,k}(b/\tau)$ at budget $b/\tau$ for each $(j,k)$; Sample $A_t \sim p_{j,\cdot}(b/\tau)$ and route $q_t$ to model $A_t$
        \STATE $b \gets b - \hat c_{j,A_t}$; $\tau \gets \tau-1$
    % \ENDIF
\ENDFOR
\end{algorithmic}
\end{algorithm}

\subsection{\efalp with Exploration}
\label{subsec:efalp}

When model responses have not been pre-collected across all candidate models for the historical queries, either because they are expensive or are not available beforehand, supervised learning is infeasible. \efalp instead learns mean reward and cost online all under the same shared budget $B$. Both statistics must be estimated post-hoc with the generated response from each model. 

\efalp extends the $\epsilon$-first ALP in \citet{ucb-alp} to jointly estimate $u_{j,k}$ and $c_{j,k}$ online. During the exploration phase of $\epsilon(T)$ steps, it routes each query to the least-invoked model within its context, ensuring every $(j,k)$ pair is visited at least $\lfloor\epsilon(T)/K\rfloor$ times, and updates running mean estimates $\bar{u}_{j,k}$, $\bar{c}_{j,k}$ from observed feedback. For the remaining $(1-\epsilon)T$ exploitation steps, the router solves $\mathrm{LP}_{\tau,b}$ with the estimated statistics. Section~\ref{sec:regret} derives the sufficient condition on $\epsilon(T)$ for an $O(\sqrt{T})$ end-to-end regret bound. The full procedure is in Algorithm~\ref{alg:eps-first-alp}.

\begin{algorithm}[h!]
\caption{\textsc{\efalp}}
\label{alg:eps-first-alp}
\begin{algorithmic}[1]

\REQUIRE Horizon $T$, budget $B$, exploration length $\epsilon(T)$, model set $\mathcal{K}$, encoder $g$, clustering map $\sigma:\mathbb{R}^d\!\to\!\{1,\dots,J\}$, context distribution $\{\pi_j\}_{j=1}^J$

\STATE Initialize $N_{j,k},\bar u_{j,k},\bar c_{j,k}\gets 0$ for all $j,k$; $b\gets B$, $\tau\gets T$,

\FOR{$t=1,\ldots,\varepsilon(T)$ \textbf{and} $b>0$}
        % \STATE Observe an incoming query $X_t$, encode it to embedding $q_t$ and assign it to context $j$
        \STATE Observe a query $q_t$; set $j\gets \sigma(g(q_t))$
        \STATE $A_t = \arg\min_{k\in\mathcal K} N_{j,k}$ with random tie-break; observe $(Y_{A_t,t}, S_{A_t,t})$
        \STATE $b\gets b - S_{A_t,t}$; $\tau \gets \tau-1$; $N_{j,k}\!\gets\!N_{j,k}+1$, $\bar u_{j,k}\!\gets\!\bar u_{j,k}+\frac{Y_{A_t,t}-\bar u_{j,k}}{N_{j,k}}$, $\bar c_{j,k}\!\gets\!\bar c_{j,k}+\frac{S_{A_t,t}-\bar c_{j,k}}{N_{j,k}}$
    % \ENDIF
\ENDFOR
\FOR{$t = \varepsilon(T)+1$ \textbf{to} $T$ \textbf{and} $b>0$}
        \STATE Observe a query $q_t$, set $j\gets \sigma(g(q_t))$ 
        % and $\tau \gets T - t + 1$
        \STATE Solve $\text{LP}_{\tau,b}$ with $(\bar u_{j,k}, \bar c_{j,k})$ to obtain $p_{j,\cdot}(b/\tau)$; 
        \STATE Sample $A_t \sim p_{j,\cdot}(b/\tau)$ and route $q_t$ to model $A_t$
        \STATE $b \gets b - \bar{c}_{j,A_t}$; $\tau \gets \tau-1$

\ENDFOR
\end{algorithmic}
\end{algorithm}

\noindent\textbf{Practical validity of assumptions.} 
The CCB formulation (Section~\ref{subsec:formulation}) rests on three assumptions that hold under standard deployment conditions. \emph{I.i.d.\ contexts:} queries from a stable user population are well-approximated by the stationary distribution ${\pi_j}$ estimated via clustering. \emph{Finite horizon $T$:} corresponds to a planning window (e.g., a monthly API quota) estimable from historical traffic. \emph{Feedback acquisition:} token-count costs are available immediately after generation; quality scores are read from $\mathcal{D}_{\mathrm{hist}}$ in \alp and collected during exploration without blocking routing decisions in \efalp. 
Appendix~\ref{app:assumptions} discusses practical mitigation for distributional shift and reward feedback.

\section{Regret Analysis for \efalp}
\label{sec:regret}
Regret measures the cumulative expected reward gap between a bandit policy and an oracle policy with full knowledge of all statistics~\cite{ucb-alp,ucb}. We analyze the regret of \efalp by first establishing an oracle upper bound, then deriving a sufficient condition on the exploration length, and finally proving an end-to-end bound that jointly covers exploration and exploitation under a shared budget. 
Since this condition depends on unknown ground-truth statistics, we further provide a confidence level test as a criterion for certifying when exploration is theoretically sufficient.

% \subsection{Regret of \alp}
% \label{subsec:regret-alp}
\noindent\textbf{Oracle upper bound.} 
Let $U^*(T,B)$ denote the cumulative expected reward of the oracle policy with knowledge of the true statistics ${u_{j,k}, c_{j,k}}$ over horizon $T$ under budget $B$. We establish an upper bound via $\mathrm{LP}_{T,B}$ which uses the same formulation as $\mathrm{LP}_{\tau,b}$ (Section~\ref{subsec:optiroute-alp}) but with a fixed per-round budget $\rho = B/T$ in place of the adaptive~$b/\tau$. 
We prove that $\widehat{U}(T,B)$ is an upper bound of $U^*(T,B)$ in Appendix~\ref{app:upperbound}. The regret of \alp is thus bounded by $R_{\text{ALP}}(T,B) \leq \widehat{U}(T,B) - U_{\text{ALP}}(T,B)$.

\noindent\textbf{Restating regret of ALP.}
The solution of upper bound algorithm $\mathrm{LP}_{T,B}$ has a greedy structure: given per-round budget constraint $\rho$, it assigns probability 1 to context-action pairs in non-increasing order using their reward-cost ratio (defined below) until the remaining budget $\rho$ can no longer support probability 1 for the next pair's expected cost, which then receives a fractional probability so that its expected cost exactly exhausts the budget. A budget $\rho$ is \emph{boundary} if it coincides exactly with a switching point in this ordering; otherwise it is \emph{non-boundary}. ALP's regret arises from fluctuations in $b/\tau$ around $\rho$: for non-boundary $\rho$, small fluctuations leave the solution probability unchanged w.h.p., incurring $O(1)$ regret; for boundary $\rho$, fluctuations may shift the probability allocation, yielding $O(\sqrt{T})$ regret. 
We restate the ALP regret bound from~\cite{ucb-alp} in Theorem~\ref{thm:alp_regret_simplified}. The full theorem statement and proof sketch are provided in Appendix~\ref{app:detial_alp_regret}. 

\begin{theorem}
\label{thm:alp_regret_simplified}
Given any fixed $\rho=B/T$, the regret of ALP with known statistics satisfies:
(1) for non-boundary cases, $R_{\text{ALP}}(T,B)=O(1);$ and (2) for boundary cases, $R_{\text{ALP}}(T,B)=O(\sqrt{T}).$
\end{theorem}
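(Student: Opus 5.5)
We follow the argument of \citet{ucb-alp}, using the upper bound from Appendix~\ref{app:upperbound}: the regret satisfies $R_{\text{ALP}}(T,B)\le \widehat U(T,B)-U_{\text{ALP}}(T,B)$, where $\widehat U(T,B)=T\,v(\rho)$ and $v(\cdot)$ is the optimal value of $\mathrm{LP}_{T,B}$ as a function of the per-round budget. The plan has four steps.

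\textbf{Step 1: structure of $v$.} We show that $v(\rho)$ is concave and piecewise linear in $\rho$. The breakpoints $\rho_1<\rho_2<\cdots$ are exactly the boundary budgets produced by the greedy reward-cost-ratio ordering. Since ALP at step $t$ earns expected reward $v(b_t/\tau_t)$ given the current state, we can write
\[
R_{\text{ALP}}\le \sum_{t=1}^T \mathbb{E}\bigl[v(\rho)-v(b_t/\tau_t)\bigr],
\]
where $b_t,\tau_t$ are the remaining budget and remaining rounds at step $t$.

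\textbf{Step 2: the process $\hat\rho_t=b_t/\tau_t$.} Under ALP with known statistics, the expected cost spent in round $t$ equals $\hat\rho_t$, as long as $\hat\rho_t$ stays below the maximal useful budget. Hence $\hat\rho_t$ is a martingale. Its increments are of order $1/\tau_t$, because per-round costs lie in $[0,1]$. A martingale concentration bound (Azuma or Freedman) then gives, for $\tau_t=T-t+1$,
\[
\mathbb{P}\bigl(|\hat\rho_t-\rho|>\delta\bigr)\le 2\exp\!\left(-c\,\delta^2\,\tau_t\right).
\]
This holds because the accumulated variance is $\sum_{s\le t} 1/\tau_s^2\lesssim 1/\tau_t$.

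\textbf{Step 3: non-boundary case.} Take $\delta>0$ smaller than the distance from $\rho$ to the nearest breakpoint. On the event $|\hat\rho_t-\rho|\le\delta$, the basic solution has the same support. The reward $v$ is linear there, and the per-round gap averages to zero by the martingale property. We therefore split
\[
\mathbb{E}\bigl[v(\rho)-v(\hat\rho_t)\bigr]=\mathbb{E}\bigl[\text{linear part}\bigr]+\mathbb{E}\bigl[\text{deviation part}\bigr].
\]
The linear part vanishes in expectation. The deviation part is at most $2\exp(-c\delta^2\tau_t)$, and summing over $t$ gives a bound $O(1/\delta^2)=O(1)$. One further term comes from the final rounds, where the budget may run out early or be left unspent. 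This is controlled by the same tail bound, since the number of leftover rounds has bounded expectation.

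\textbf{Step 4: boundary case.} Now $v$ has a kink at $\rho$, so by concavity $v(\rho)-v(\hat\rho_t)\le L|\hat\rho_t-\rho|$, where $L$ is the larger one-sided slope. Then
\[
\sum_t \mathbb{E}|\hat\rho_t-\rho|\le\sum_t \sqrt{\mathrm{Var}(\hat\rho_t)}\lesssim \sum_{\tau=1}^T \tau^{-1/2}=O(\sqrt T).
\]
The end-of-horizon leftover is bounded similarly.

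\textbf{Main obstacle.} The delicate step is Step 3. It requires the exact cancellation of the linear term, which is the martingale identity $\mathbb{E}[\hat\rho_{t+1}\mid\mathcal F_t]=\hat\rho_t$. It also requires careful handling of the stopping time at which the budget is exhausted or $\tau$ becomes small, where the $1/\tau$ increments are no longer small. To handle this, I would first truncate the analysis at $\tau\ge\tau_0$ for a constant $\tau_0$, and bound the last $\tau_0$ rounds trivially by $\tau_0$.
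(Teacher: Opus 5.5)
Your proposal is correct and follows essentially the same route as the paper's sketch, which is inherited from the ALP analysis of Wu et al. You bound the regret by $\sum_t \mathbb{E}[v(\rho)-v(b_t/\tau_t)]$, concentrate $b/\tau$ around $\rho$ via Azuma--Hoeffding, obtain the $O(1)$ non-boundary bound from local linearity of $v$, martingale cancellation of the linear term and summable exponential tails, and reduce the boundary case to $\sum_\tau \mathbb{E}|b/\tau-\rho| = O(\sqrt{T})$. Your variance bound is interchangeable with the paper's integration of the tail. One small correction in Step 4: $L$ must be the global slope bound $\tilde\eta_1$, or alternatively you can cancel the linear term by the martingale property and charge the remainder $(\tilde\eta_1-\tilde\eta_M)|b/\tau-\rho|$ as the paper does. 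The one-sided slope at $\rho$ is not enough, because by concavity it only lower-bounds $v(\rho)-v(x)$ for $x<\rho$, strictly so once a breakpoint lies between $x$ and $\rho$.
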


% \subsection{Regret Analysis for \efalp}
% \label{subsec:regret-efalp}

\noindent\textbf{Sufficient exploration for \efalp.}
In \efalp, $u_{j,k}$ and $c_{j,k}$ are unknown and estimated online. Meanwhile, the ALP solution relies on an ordered list of context-action pairs determined using the pairwise ratios $\xi_{j,k_1,k_2} := (u_{j,k_1} - u_{j,k_2})/(c_{j,k_1} - c_{j,k_2})$. If empirical estimates $\bar{u}_{j,k}$, $\bar{c}_{j,k}$ induce a different ordering of $\bar\xi_{j,k_1,k_2} := (\bar{u}_{j,k_1} - \bar{u}_{j,k_2})/(\bar{c}_{j,k_1} - \bar{c}_{j,k_2})$ when exploration ends, the solution during exploitation will be suboptimal compared with the ground truth solution of ALP. If exploration instead produces estimates accurate enough to recover the correct ordering, exploitation runs ALP with the right solution structure and inherits the regret bound of Theorem~\ref{thm:alp_regret_simplified}. 

Our goal is therefore to characterize how much exploration suffices to guarantee correct ordering with high probability. The following quantities characterize how difficult this ordering is to recover from finite samples. We assume $c_{j,k_1}\neq c_{j,k_2}$ for all $j$ and actions $k_1\neq k_2$, and define three quantities: the minimal cost gap 
% $\Delta_{\min}^{c} :=\min_{j\in\mathcal J}\min_{k_1\neq k_2\in\{0\}\cup\mathcal K}\bigl|c_{j,k_1}-c_{j,k_2}\bigr|$, 
$\Delta_{\min}^{c} :=\min_{j\in\mathcal J, k_1\neq k_2\in\{0\}\cup\mathcal K}\bigl|c_{j,k_1}-c_{j,k_2}\bigr|$,
the minimal ratio separation $\Delta_{\min}^{\xi} = \min_{j_1,j_2 \in \mathcal{J},\, k_{11},k_{12},k_{21},k_{22} \in \{0\}\cup\mathcal{K}} |\xi_{j_1,k_{11},k_{12}} - \xi_{j_2,k_{21},k_{22}}|$, and $\pi_{\min}:=\min_{j\in \mathcal{J}}\pi_j$. Small values require longer exploration to achieve correct ordering because the tighter cost gaps and ratio separations require more samples to distinguish reliably, while rarer contexts accumulate observations more slowly. However, smaller separations also imply that the context-action pairs have similar \(\xi\), so misorderings among them are less consequential.
Lemma~\ref{lem:sufficient_explore} shows that if $\epsilon(T)$ is sufficient, then $\epsilon$-first ALP recovers the correct ordering of $\xi_{j,k_1,k_2}$ with high probability after exploration.

\begin{lemma}[Sufficient exploration]
  \label{lem:sufficient_explore}       
  Let $0 < \delta < 1$ and $L := (\Delta_{\min}^c)^2\Delta_{\min}^\xi$. Under \efalp, if 
  \begin{equation}                       \epsilon(T) \;\geq\; \Bigl\lceil \frac{K}{(1-\delta)\pi_{\min}} +       \log T \cdot \max \;\Bigl\{ \frac{1}{\delta^2},\;                 
  \frac{128K}{(1-\delta)\pi_{\min}L^2} \Bigr\} \Bigr\rceil,                   \label{eq:eps-condition}              \end{equation}                        then for any contexts $j_1, j_2 \in \mathcal{J}$ and actions $k_{11}, k_{12}, k_{21}, k_{22} \in \{0\}\cup\mathcal{K}$, if $\xi_{j_1,k_{11},k_{12}} < \xi_{j_2,k_{21},k_{22}}$, then at the end of the $\epsilon(T)$-th round, $\mathbb{P}\!\left[\,\bar{\xi}_{j_1,k_{11},k_{12}} \geq
  \bar{\xi}_{j_2,k_{21},k_{22}}\right] \leq (16+J)T^{-2}.$  
  Moreover, the algorithm ranks all the $\xi_{j,k_1,k_2}$'s correctly with probability no less than $1-(4KJ+J)T^{-2}$.         
  \end{lemma}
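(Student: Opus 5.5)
The argument has three steps: lower-bound the number of samples of every context-action pair, turn that into accuracy of $\bar u$ and $\bar c$ via Hoeffding, and then push the accuracy through the ratio map $(u,c)\mapsto \xi$ so that every estimated ratio lies within $\Delta^\xi_{\min}/2$ of its true value. Once that holds, any strict inequality $\xi_{j_1,k_{11},k_{12}}<\xi_{j_2,k_{21},k_{22}}$ is preserved, since the two true values are at least $\Delta^\xi_{\min}$ apart. The null action $k=0$ has $u_{j,0}=c_{j,0}=0$ known exactly, so only the pairs with $k\in\mathcal K$ need to be estimated.

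\textbf{Step 1 (visits per context).} Let $M_j$ be the number of rounds among the first $\epsilon(T)$ with $X_t=j$; then $M_j\sim\mathrm{Bin}(\epsilon(T),\pi_j)$. A multiplicative Chernoff bound, or Hoeffding's inequality with deviation $\delta\,\epsilon(T)\pi_j$, gives
\[
M_j\ge (1-\delta)\pi_{\min}\epsilon(T)
\]
with probability at least $1-T^{-2}$. This is where the $\log T/\delta^2$ term in \eqref{eq:eps-condition} is used. A union bound over $j$ then produces the additive $J T^{-2}$. Because the exploration rule picks the least-invoked arm in each context, every $(j,k)$ with $k\in\mathcal K$ gets $N_{j,k}\ge\lfloor M_j/K\rfloor$. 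The first term $K/((1-\delta)\pi_{\min})$ in \eqref{eq:eps-condition} absorbs the floor, so
\[
N_{j,k}\ge n_0:=\log T\cdot 128/L^2 .
\]

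\textbf{Step 2 (concentration of means).} Conditional on $N_{j,k}\ge n_0$, Hoeffding's inequality for $[0,1]$ variables gives
\[
|\bar u_{j,k}-u_{j,k}|\le\eta \quad\text{and}\quad |\bar c_{j,k}-c_{j,k}|\le\eta
\]
each with probability at least $1-2e^{-2n_0\eta^2}$. I will choose $\eta=L/8=(\Delta^c_{\min})^2\Delta^\xi_{\min}/8$, so that $2n_0\eta^2=4\log T$ and the failure probability per event is at most $2T^{-4}\le 2T^{-2}$. To avoid the dependence between $N_{j,k}$ and the samples, I will use the standard stack-of-rewards argument, or a union bound over the at most $T$ possible values of $N_{j,k}$. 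A single comparison involves four arm-pairs and two statistics each, which gives the constant $16$. A union over all $KJ$ pairs for $u$ and $c$ gives the $4KJ$ in the global statement.

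\textbf{Step 3 (perturbation of ratios, the main obstacle).} I need to bound $|\bar\xi-\xi|$ for $\xi=(a_1-a_2)/(d_1-d_2)$, where the numerator and denominator are each perturbed by at most $2\eta$ and $|d_1-d_2|\ge\Delta^c_{\min}$. Writing
\[
\bar\xi-\xi=\frac{(\bar a-a)d-a(\bar d-d)}{\bar d\,d}
\]
and using $|a|\le1$, $|d|\le1$, and $|\bar d|\ge\Delta^c_{\min}-2\eta\ge\Delta^c_{\min}/2$, I get
\[
|\bar\xi-\xi|\le \frac{8\eta}{(\Delta^c_{\min})^2}.
\]
For this to be at most $\Delta^\xi_{\min}/2$ I need $\eta\le L/16$, which means $n_0$ should be adjusted by a constant. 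I expect the bookkeeping to close with the stated $128$ once the constants are tracked carefully. The delicate points are the possibility that $\bar d$ flips sign, which the condition $2\eta<\Delta^c_{\min}/2$ excludes since $\Delta^\xi_{\min}\le1$ can be assumed, and the pairs that involve the null action.

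Combining the three steps, on the good event both ratios are within $\Delta^\xi_{\min}/2$ of their true values and the ordering holds. This gives the pairwise bound $(16+J)T^{-2}$, and the global union bound gives $1-(4KJ+J)T^{-2}$.
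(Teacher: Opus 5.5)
Your Steps 1--2 and the counting ($8$ estimates per comparison giving $16T^{-2}$, $2KJ$ estimates giving $4KJT^{-2}$, plus $JT^{-2}$) follow the paper. Step 3 takes a different route. The paper never bounds $|\bar\xi-\xi|$. It compares the cross-products $Z=\Delta u_1\Delta c_2-\Delta u_2\Delta c_1=(\xi_1-\xi_2)\Delta c_1\Delta c_2$ and $\bar Z$, uses $|Z|\ge L$ directly, and shows $|\bar Z-Z|\le 8t+8t^2<L$ at $t=L/(8\sqrt{2})$. That is exactly the threshold at which $N_{j,k}\ge 128\log T/L^2$ gives $2e^{-2N_{j,k}t^2}\le 2T^{-2}$.

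\textbf{Gap in Step 3.} As written, your Step 3 does not deliver the stated bound, and tracking constants more carefully inside your estimate does not fix it. Your bound $|\bar\xi-\xi|\le 8\eta/(\Delta^c_{\min})^2$ forces $\eta\le L/16$. With $n_0=128\log T/L^2$ this gives $2n_0\eta^2=\log T$, so each estimate fails with probability $2T^{-1}$, not $2T^{-2}$. It also contradicts the $\eta=L/8$ you fixed in Step 2. There are two ways to repair it.
\begin{enumerate}
\item Switch to the paper's sign test on $\bar Z$. You then still need $\mathrm{sign}(\Delta\bar c_i)=\mathrm{sign}(\Delta c_i)$, because $\bar\xi_1-\bar\xi_2=\bar Z/(\Delta\bar c_1\Delta\bar c_2)$. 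This holds since $2t<\Delta^c_{\min}$; the paper leaves it implicit, and you were right to flag the denominator sign.
\item Keep your ratio bound, but take $\eta=L/(8\sqrt{2})$ and use the sharper $|\bar d|\ge\Delta^c_{\min}-2\eta$. You also need the extra fact $\Delta^c_{\min}\Delta^\xi_{\min}\le 1$. It holds because all ratios lie in $[-1/\Delta^c_{\min},1/\Delta^c_{\min}]$, and either there are at least three distinct ratio values, or every context's points $(c_{j,k},u_{j,k})$ are collinear through the origin so all ratios lie in $[0,1/\Delta^c_{\min}]$. Together these give $|\bar\xi-\xi|\le\Delta^\xi_{\min}/(2\sqrt{2}-\tfrac12)<\Delta^\xi_{\min}/2$.
\end{enumerate}

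Two smaller points. First, ``$\Delta^\xi_{\min}\le 1$ can be assumed'' is unjustified, since ratios scale like $1/\Delta^c_{\min}$. For excluding the sign flip you only need $\Delta^c_{\min}\Delta^\xi_{\min}\le 2$, which is automatic from $|\xi|\le 1/\Delta^c_{\min}$. Second, the union bound over the $T$ possible values of $N_{j,k}$ costs a factor $T$ once $\eta$ sits at the $T^{-2}$ threshold. Instead, use that the least-invoked rule makes the counts depend only on contexts and tie-breaks, so Hoeffding applies conditionally on them, as the paper implicitly does.

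\textbf{Step 1.} Step 1 does not follow from the stated condition either. Hoeffding with deviation $\delta\pi_j\epsilon(T)$ gives $e^{-2\delta^2\pi_j^2\epsilon(T)}$, and multiplicative Chernoff gives $e^{-\delta^2\pi_j\epsilon(T)/2}$. So $\epsilon(T)\ge\log T/\delta^2$ yields, for the rarest context, only $T^{-2\pi_{\min}^2}$ or $T^{-\pi_{\min}/2}$, not $T^{-2}$; you would need $\epsilon(T)\ge\log T/(\delta\pi_{\min})^2$. The paper's own proof makes the same slip: it writes $Je^{-2\delta^2\epsilon(T)}$, which is the bound for an absolute deviation $\delta\epsilon(T)$. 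This is not a divergence from the paper, but you should not present it as the place where the $\log T/\delta^2$ term suffices.
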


Absent in prior regret analyses with known costs, the core challenge is that both $u_{j,k}$ and $c_{j,k}$ must be estimated simultaneously, and errors in both propagate through the ratio $\xi_{j, k_1, k_2}$. Our proof (Appendix~\ref{app:sufficient_explore}) reformulates ratio comparison as an equivalent sign test on the difference of their cross-products, avoiding random denominators entirely. We apply Hoeffding's inequality to the individual empirical means $\bar{u}_{j,k}$ and $\bar{c}_{j,k}$, and the resulting concentration bounds propagate to the cross-product difference via triangle inequality.

\begin{theorem}[End-to-end regret of \efalp]
\label{thm:efalp-regret}
Let $0 < \delta < 1$. Under \efalp, if $\epsilon(T)$ satisfies condition \eqref{eq:eps-condition}, then the regret of \efalp satisfies:
\textup{(1)} for non-boundary cases, $R_{\textup{\efalp}}(T,B) = O(\log T)$;\; \textup{(2)} for boundary cases, $R_{\textup{\efalp}}(T,B) = O(\sqrt{T})$.
\end{theorem}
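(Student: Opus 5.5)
We split the regret of \efalp against the oracle upper bound $\widehat{U}(T,B)$ into three pieces. The first is the loss during the $\epsilon(T)$ exploration rounds. The second is the loss during the $T-\epsilon(T)$ exploitation rounds on the event $\mathcal{E}$ that all $\xi_{j,k_1,k_2}$ are ranked correctly. The third is the loss on the complement $\mathcal{E}^c$. Concretely, we write $\widehat{U}(T,B)=T\,v(\rho)$, where $v(\rho)$ is the optimal value of $\mathrm{LP}_{T,B}$ at per-round budget $\rho=B/T$. Then
\[
R_{\textup{\efalp}}(T,B)\le \underbrace{\epsilon(T)\,v(\rho)}_{\text{exploration}} + \underbrace{\mathbb{E}\bigl[(T-\epsilon(T))v(\rho)-\textstyle\sum_{t>\epsilon(T)}Y_t \,\big|\, \mathcal{E}\bigr]}_{\text{exploitation}} + \underbrace{T\,\mathbb{P}(\mathcal{E}^c)}_{\text{failure}} .
\]

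Since rewards lie in $[0,1]$, the exploration term is at most $\epsilon(T)$. We choose $\epsilon(T)$ equal to the right-hand side of \eqref{eq:eps-condition}. That expression is $O(\log T)$, with constants depending on $K,\pi_{\min},L,\delta$. By Lemma~\ref{lem:sufficient_explore}, $\mathbb{P}(\mathcal{E}^c)\le (4KJ+J)T^{-2}$, so the failure term is $O(1/T)$.

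The exploitation term carries the main work. On $\mathcal{E}$, the estimated ordering coincides with the true one. The LP solutions computed from $(\bar u,\bar c)$ therefore share the greedy structure of the true ALP: the same pairs are saturated in the same order as a function of the per-round budget. We then compare the exploitation phase with a fresh run of ALP that has known statistics, horizon $T'=T-\epsilon(T)$, and budget $b'$ equal to the budget remaining after exploration.

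The remaining budget $b'$ is random. Its ratio $b'/T'$ differs from $\rho$ because the realized costs during exploration deviate from the average cost $\rho$. By Hoeffding's inequality this deviation is $O(\sqrt{\epsilon(T)\log T})$ in total. It is diluted over $T'$ rounds to $O(\sqrt{\log T}\,\sqrt{\log T}/T)$ per round, which vanishes. In the non-boundary case, $\rho$ lies strictly inside an interval on which the greedy allocation is fixed. For large $T$, $b'/T'$ stays in the same interval with probability $1-O(T^{-2})$, and Theorem~\ref{thm:alp_regret_simplified}(1) gives $O(1)$ regret relative to $T'v(b'/T')$. Because $v$ is piecewise linear and Lipschitz with constant at most $\max u/c$-type ratios, we also have $T'|v(b'/T')-v(\rho)|=O(\sqrt{\epsilon(T)\log T})=O(\log T)$. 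In the boundary case, Theorem~\ref{thm:alp_regret_simplified}(2) gives $O(\sqrt{T'})$, and the same Lipschitz argument adds only $O(\log T)$. Summing the three terms yields $O(\log T)$ and $O(\sqrt{T})$ respectively.

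The hardest step is justifying that ALP run with the estimates $(\bar u,\bar c)$ behaves like ALP run with the true statistics, given only that the ordering is correct. Two problems arise. First, the fractional probability on the boundary pair depends on $\bar c$. Second, the budget is decremented by $\bar c_{j,A_t}$ rather than by the true cost. I would handle both by Hoeffding once more. Each pair has been sampled $\Omega(\log T/\pi_{\min})$ times, so $|\bar c_{j,k}-c_{j,k}|=O(\sqrt{\log T/\epsilon(T)})$. This perturbation moves the effective per-round budget, and hence the fractional probability, by a comparably small amount. In the non-boundary case, the assumption that pairs are separated by $\Delta_{\min}^\xi$ and $\Delta^c_{\min}$ keeps the saturated set unchanged, so the induced excess regret is absorbed into the $O(\log T)$ term. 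If this per-round error accumulated linearly over $T'$ rounds, the argument would break. In that case I would instead bound the expected true cost spent by $B$ plus a martingale deviation, which reduces everything to the boundary-style fluctuation analysis of \cite{ucb-alp}.
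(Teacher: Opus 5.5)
Your three-way split (exploration at most $\epsilon(T)$, failure at most $T\cdot(4KJ+J)T^{-2}$, and exploitation on the correct-ranking event via Theorem~\ref{thm:alp_regret_simplified}) is exactly the paper's sketch. Pinning $\epsilon(T)$ to the right-hand side of \eqref{eq:eps-condition} is a necessary repair, because with only a lower bound the exploration loss is $\Theta(\epsilon(T))$. However, the step you flag as hardest is a genuine gap, and your fix does not close it; the paper's sketch makes the same leap without justification.

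In the $\epsilon$-first ALP of \citet{ucb-alp} the expected costs are known. There, once the $\xi$'s are ordered correctly, the LP solution (which depends on rewards only through that ordering) coincides with the true one. In WR-Online the solution depends on $\bar c$ quantitatively. Even if the saturated set is unchanged, the fractional probability on the $(\tilde i(\rho)+1)$-th pair is computed from estimated cost masses. And because $b$ is decremented by $\bar c_{j,A_t}$, the adaptive rule steers the \emph{estimated} spend, not the true spend, to $\rho$.

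Here is a concrete case. Take one context with actions $(u,c)=(0.5,0.2)$ and $(0.9,0.6)$, and $\rho=0.4$. This is non-boundary, since $Q_1=0.2$ and $Q_2=0.6$, and the optimum plays each action with probability $1/2$. Now let $\bar c_1=0.2+e$ with everything else exact.
\begin{itemize}
\item For small $e$ the ordering survives.
\item The LP puts probability $(0.2-e)/(0.4-e)$ on action $2$.
\item Its estimated spend is exactly $\rho$, so $b/\tau$ has no drift.
\item Its true reward and true cost are each about $e/2$ below $0.7$ and $0.4$ in every exploitation round.
\end{itemize}

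So, conditional on the exploration data, there is a fixed per-round bias proportional to $|\bar c-c|$, summed over all $T'$ rounds. This bias is not a martingale fluctuation, and the algorithm never observes true costs after exploration, so your fallback has nothing to act on.

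Nor is the bias small. With $\epsilon(T)=\Theta(\log T)$, each pair has only $\Theta(\log T)$ samples, so your $O(\sqrt{\log T/\epsilon(T)})$ is $O(1)$. Lemma~\ref{lem:sufficient_explore} only guarantees $|\bar c-c|<L/(8\sqrt{2})$, and the typical error is of order $1/\sqrt{\log T}$.

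Averaging over the exploration randomness does not rescue the bound. In the example, $\bar p_2$ is strictly concave in $\bar c_1$. By Jensen, whenever action $1$'s cost is noisy there is a systematic underspend of order $\mathrm{Var}(\bar c_1)=\Theta(1/\log T)$ per round. That gives expected regret of order $T/\log T$ with $\epsilon(T)$ at the threshold, which is incompatible with both (1) and (2).

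Relatedly, whether the saturated set is preserved at all depends on the distance from $\rho$ to the nearest $Q_i$. The quantities $\Delta^{\xi}_{\min}$ and $\Delta^{c}_{\min}$ do not control that distance.

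Closing the gap needs known costs, or a modified algorithm and analysis (for instance, continuing to estimate costs and tracking the budget with observed costs). It cannot be done by reapplying Theorem~\ref{thm:alp_regret_simplified} on the correct-ranking event.

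A minor point: exploration spends at the round-robin average cost, not at $\rho$. So the post-exploration budget offset is $\Theta(\epsilon(T))$ rather than $O(\sqrt{\epsilon(T)\log T})$. This is harmless because costs lie in $[0,1]$.
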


\noindent\textbf{Proof sketch.}
When rankings are correct, the exploitation phase runs ALP with correctly ordered statistics and incurs the regret from Theorem~\ref{thm:alp_regret_simplified}. The exploration phase contributes an additional $O(\log T)$ regret. Combined, the total is $O(\log T)$ in the non-boundary case and $O(\sqrt{T})$ in the boundary case since $O(\sqrt{T})$ dominates $O(\log T)$. When rankings are incorrect, the probability of incorrect ordering is $O(T^{-2})$ by Lemma~\ref{lem:sufficient_explore} that contributes a total of $O(T)\cdot O(T^{-2})=O(T^{-1})$ to regret, which is negligible. \qed

% \subsection{Confidence Level Test for \efalp}
% \label{sec:CLT}
\noindent\textbf{Confidence level test.} 
We further derive a confidence level test (Appendix~\ref{app:clt}) that certifies when the estimated statistics are accurate enough to guarantee correct ordering of $\xi_{j,k_1,k_2}$ w.h.p., providing a theoretically grounded stopping criterion. As the test is conservative in practice because reward and cost gaps in LLM routing are often small, our experiments use a fixed exploration schedule instead. 

\section{Experiments}
\label{sec:experiment}

We evaluate \optiroute under varying workload-level budgets. Section~\ref{sec: offline_res} compares \alp against cost-constrained LLM selection baselines on performance, budget adherence, and latency. Section~\ref{sec:online_res} evaluates \efalp's learning efficiency against offline methods and includes simulation results on empirical regret and budget consumption. Ablation study is provided in Appendix~\ref{app:ablation}.

\subsection{LLM Routing Experiment Setup}
\label{subsec:exp_setup}

\noindent\textbf{Datasets.} 
We evaluate the LLM routing baselines on RouterBench~\cite{routerbench} and SWE-Bench-verified~\cite{swebench} (SWE-Bench). RouterBench contains $36,497$ queries from eight diverse datasets, while SWE-Bench includes 500 realistic questions from real GitHub issues. All datasets use a 50:50 train-test split. Further details, including single-model performance, are provided in Appendix~\ref{app:dataset}.

\noindent\textbf{Implementation Details for \optiroute.}
We embed queries using OpenAI's \textsc{text-embedding-3-small} ($d=1536$) and discretize via K-means~\cite{kmeans1967} with $J=16$, selected on validation sets; Appendix~\ref{app:ablation} shows robustness to these choices. Deployment budgets range from the cost of routing all queries to the cheapest model to that of always routing all to the most expensive. For \alp, $T$ is the number of test queries with statistics estimated from the training set. For \efalp, $T$ spans both training and test queries; exploration is restricted to the training set for fair comparison with offline methods, with exploration length set to $1\times$ -- $4\times$ the number of queries in training set. Unlike offline training which requires all model responses per query, each exploration step invokes only one model, keeping the total exploration cost well below the offline baseline.

\noindent\textbf{Evaluation Metrics.}
For each dataset, we use the per-query performance scores and monetary costs as the reward and cost observations; details are in Appendix~\ref{app:dataset}.

\noindent\textbf{Baselines.}
We compare \optiroute against the following baselines, all of which are inference-time LLM selection methods under cost constraints: (1) \textbf{FrugalGPT}~\cite{frugalgpt2024} -- an LLM cascading method that constrains on a per-query budget; (2) \textbf{CascadeRouting}~\cite{UnifiedApproachRouting2025} -- a unified LLM cascade and routing method, with a per-query budget constraint; (3) \textbf{MetaLLM}~\cite{metallm} -- a bandit-based routing method that controls the per-query cost as a cost-scaling penalty in the reward function and uses UCB~\cite{ucb} for model selection; (4) \textbf{SingleBest} -- for each context $j$, greedily selects the highest-reward action within a per-query budget.
% ;  (5) \textbf{StaticLP} -- solves $\mathrm{LP}_{T,B}$ once using the fixed per-query budget $B/T$, then samples a concrete action at each round given the incoming context. 
All baseline implementation details are provided in Appendix~\ref{app:baseline}. \footnote{PILOT~\cite{PILOT2025} is not publicly available; we therefore omit it from experiments and discuss it analytically in Section~\ref{sec:related_work}.}

\subsection{Offline Baseline Comparison}

\begin{table*}[t]
\centering
\caption{Performance comparison under varying budget constraints (\$). SingleBest, MetaLLM, and \alp involve randomness and are averaged over five runs;  FrugalGPT and CascadeRouting are deterministic and reported as single runs. Best results in \textbf{bold}.}
\label{tab:offline_performance_comparison}
\footnotesize
\setlength{\tabcolsep}{4pt}
\resizebox{\textwidth}{!}{
\begin{tabular}{l
cccccc
cccccc}
\toprule
& \multicolumn{6}{c}{\textbf{RouterBench}} 
& \multicolumn{6}{c}{\textbf{SWE-Bench}} \\
\cmidrule(lr){2-7} \cmidrule(lr){8-13}

\textbf{Method}
& \multicolumn{1}{c}{\scriptsize B=2.52}
& \multicolumn{1}{c}{\scriptsize B=19.92}
& \multicolumn{1}{c}{\scriptsize B=37.31}
& \multicolumn{1}{c}{\scriptsize B=54.71}
& \multicolumn{1}{c}{\scriptsize B=72.10}
& \multicolumn{1}{c}{\scriptsize B=151.42}
& \multicolumn{1}{c}{\scriptsize B=232.72}
& \multicolumn{1}{c}{\scriptsize B=253.04}
& \multicolumn{1}{c}{\scriptsize B=273.37}
& \multicolumn{1}{c}{\scriptsize B=293.69}
& \multicolumn{1}{c}{\scriptsize B=314.02}
& \multicolumn{1}{c}{\scriptsize B=659.44} \\

\midrule
FrugalGPT      
& 0.00 & 0.70 & 0.70 & 0.72 & 0.73 & 0.80 
& 0.00 & 0.00 & 0.00 & 0.42 & \textbf{0.67} & \textbf{0.68} \\

CascadeRouting 
& 0.50& 0.71 & 0.73 & 0.75 & 0.76 & 0.79 
& 0.51 & 0.55 & 0.55 & 0.56 & 0.56 & 0.67 \\

SingleBest
& 0.36$_{\pm 0.011}$
& 0.69$_{\pm 0.002}$
& 0.70$_{\pm 0.002}$
& 0.70$_{\pm 0.001}$
& 0.71$_{\pm 0.002}$
& 0.79$_{\pm 0.001}$ 
& 0.24$_{\pm 0.037}$
& 0.31$_{\pm 0.064}$
& 0.36$_{\pm 0.095}$
& 0.40$_{\pm 0.078}$
& 0.46$_{\pm 0.054}$
& 0.65$_{\pm 0.020}$ \\

MetaLLM
& \textbf{0.55}$_{\pm 0.003}$
& 0.70$_{\pm 0.004}$
& 0.72$_{\pm 0.020}$
& 0.74$_{\pm 0.003}$
& 0.74$_{\pm 0.003}$
& \textbf{0.81}$_{\pm 0.001}$
& 0.51$_{\pm 0.035}$
& 0.51$_{\pm 0.035}$
& 0.52$_{\pm 0.103}$
& 0.61$_{\pm 0.057}$
& 0.65$_{\pm 0.018}$
& 0.65$_{\pm 0.018}$ \\

\rowcolor{methodshade}
\alp           
& 0.54$_{\pm 0.003}$
& \textbf{0.72}$_{\pm 0.002}$
& \textbf{0.74}$_{\pm 0.003}$
& \textbf{0.76}$_{\pm 0.002}$
& \textbf{0.77}$_{\pm 0.001}$
& \textbf{0.81}$_{\pm 0.001}$
& \textbf{0.58}$_{\pm 0.033}$
& \textbf{0.60}$_{\pm 0.027}$
& \textbf{0.62}$_{\pm 0.031}$
& \textbf{0.64}$_{\pm 0.020}$
& 0.65$_{\pm 0.020}$
& 0.65$_{\pm 0.017}$ \\

\bottomrule
\end{tabular}
}
\end{table*}

\begin{figure}[t]
\centering
\includegraphics[width=0.65\columnwidth]{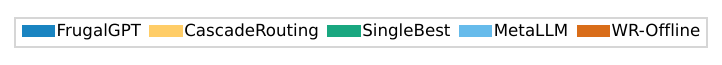}
\begin{minipage}[b]{0.49\columnwidth}
    \centering
    \includegraphics[width=\linewidth]{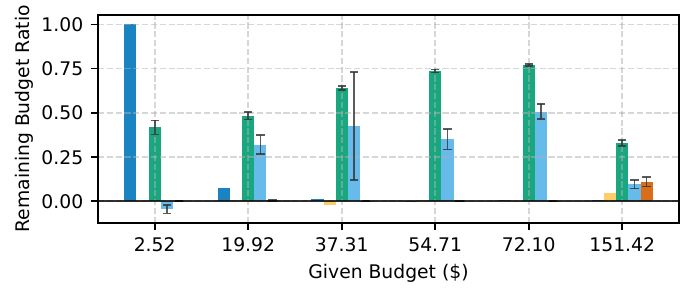}
    
    \small (a) Budget Utilization Analysis for RouterBench
\end{minipage}\hfill
\begin{minipage}[b]{0.49\columnwidth}
    \centering
    \includegraphics[width=\linewidth]{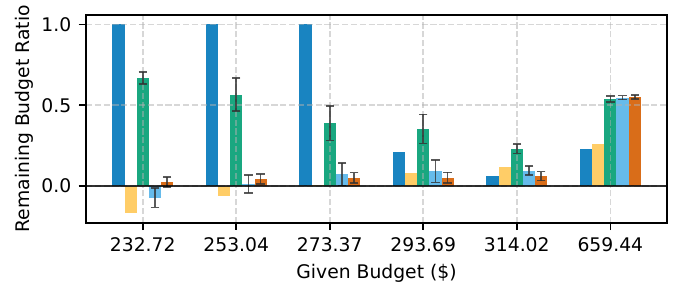}
    
    \small (b) Budget Utilization Analysis for SWE-Bench
\end{minipage}

\caption{Remaining budget ratios at each budget level. Error bars show $\pm1$ std over five runs.}
\label{fig:combined_remain_budget}
\end{figure}

\begin{figure}[t]
    \centering
    % --- Left Side: Table ---
    \begin{minipage}{0.48\linewidth}
        \centering
        \captionof{table}{Latency comparison of LLM routing baselines on RouterBench dataset averaged over all budget settings.}
        \label{tab:latency_routerbench}
        \footnotesize
        \begin{tabular}{l c c}
            \noalign{\hrule height 1pt}
            \multirow{2}{*}{Method} & Train & Inference \\
            & Latency (s) & Latency/Query (s) \\
            \hline
            FrugalGPT       & $8.85 \times 10^{3}$ & $1.51 \times 10^{-2}$ \\
            CascadeRouting  & $4.19 \times 10^3$   & $2.07\times 10^{-3}$ \\
            SingleBest      & 1.50                 & $2.25 \times 10^{-4}$ \\
            MetaLLM         & 1.76                 & $5.35 \times 10^{-3}$ \\
            \alp            & 1.50                 & $1.56 \times 10^{-3}$ \\
            \noalign{\hrule height 1pt}
        \end{tabular}
    \end{minipage}
    \hfill % Adds spacing between the two minipages
    % --- Right Side: Figure ---
    \begin{minipage}{0.49\linewidth}
        \centering
        \includegraphics[width=\linewidth]{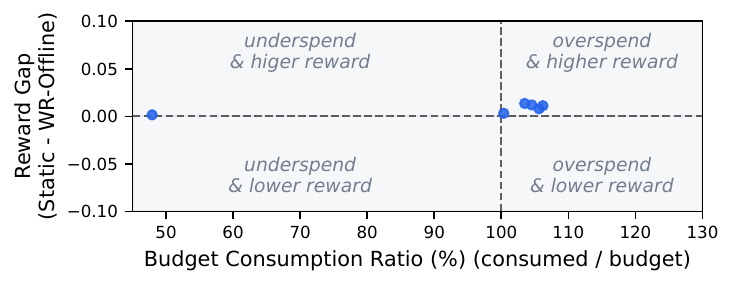} % Replace with your image filename
      
        \caption{Budget consumption vs.\ reward gap for the static variant on SWE-Bench (5 runs); leftmost point is under the largest budget constraint.}
        \label{fig:slp_WRO}
    \end{minipage}
\end{figure}

\noindent\textbf{Performance comparison.}
\label{sec: offline_res} 
Table~\ref{tab:offline_performance_comparison} reports test performance across budget levels on RouterBench and SWE-Bench.          
\alp consistently matches or outperforms all baselines at low-to-mid budgets. 
FrugalGPT fails entirely at the lowest budgets on both datasets, as its cascade policy requires invoking multiple models per query and thus cannot satisfy tight per-query cost constraints. CascadeRouting is competitive on RouterBench, where it uses test-set statistics following its original setting, but drops on SWE-Bench where only regression-based estimates are available, revealing sensitivity to estimation quality.
At the highest budget, most methods converge as any model can be freely invoked.

\begin{figure*}[t]
\centering
\begin{subfigure}[t]{0.48\textwidth}
    \centering
\includegraphics[width=0.9\textwidth]{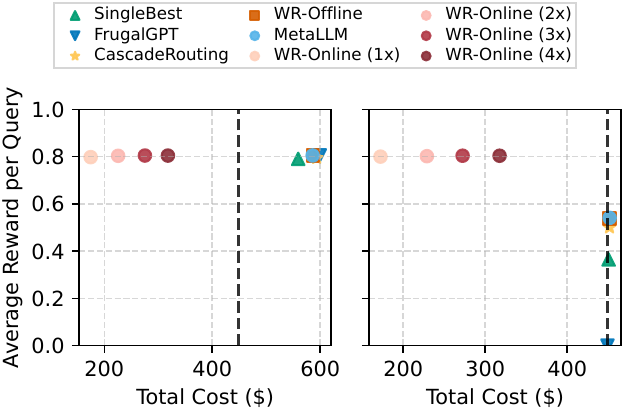}
    \caption{Left: generous budget. Right: constrained budget.}
    \label{fig:epsalp_all}
\end{subfigure}
\hfill
\begin{subfigure}[t]{0.48\textwidth}
    \centering
\includegraphics[width=\textwidth]{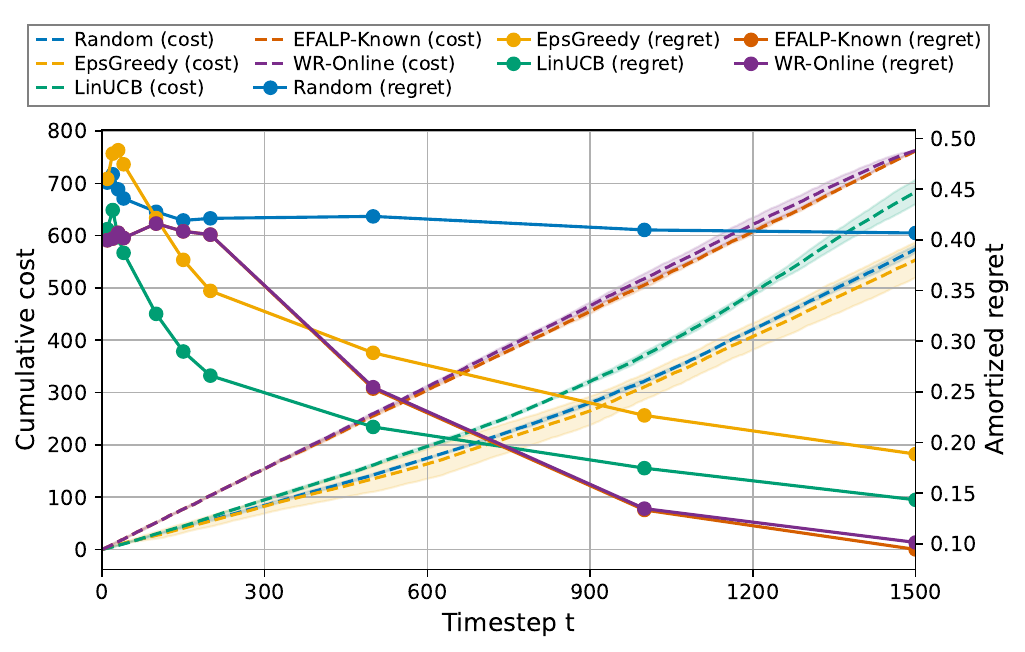}
    \caption{Simulation experiment 
    % under $B=762.58$.
    }
    \label{fig:simulation}
\end{subfigure}
% \vspace{-0.5em}
\caption{(a) \efalp vs.\ offline baselines on RouterBench; vertical dashed lines mark offline training cost. (b) Cumulative cost and amortized regret of budget-aware online algorithms (5 runs).}
\label{fig:routerbench-results}
% \vspace{-0.5em}
\end{figure*}

\noindent\textbf{Budget Utilization.} 
Figure~\ref{fig:combined_remain_budget} plots the ratio of remaining to total budget; positive values near zero indicate effective utilization without overspending, while negative values indicate violations. On RouterBench, \alp maintains this ratio closest to zero across all budget levels because its adaptive re-solving with $b/\tau$ at each step continuously corrects for cost consumption. 
FrugalGPT underutilizes the budget at mid budget levels and fails entirely at the tightest budget. CascadeRouting generally respects the budget with occasional violations at low-to-mid levels. On SWE-Bench, a medium-cost model dominates the cost-performance frontier, leaving some budget unused across methods. The exception is FrugalGPT at the highest budget, which exploits the extra slack to achieve the best performance on this dataset (see also Table~\ref{tab:offline_performance_comparison}).

\noindent\textbf{Latency Comparison.} 
Table~\ref{tab:latency_routerbench} reports training and per-query inference latency on RouterBench. FrugalGPT and CascadeRouting re-optimize cascade parameters \textit{per budget level}, incurring high training cost. \alp trains in $1.50$s on CPU, over $5{,}900\times$ faster than FrugalGPT which requires $8.85\times10^3$s on GPU because \alp estimates statistics once and adapts to any budget at inference without retraining. 

\noindent\textbf{Adaptive LP vs.\ Static LP constraint.} 
To assess whether \alp's adaptive constraint is essential for budget compliance, we compare it against a static variant that solves $\mathrm{LP}_{T,B}$ with the same estimated statistics. We focus on the more realistic SWE-Bench, where high per-query cost variance makes budget drift more likely and adaptivity more important. Each point in Figure~\ref{fig:slp_WRO} shows its budget consumption and performance difference w.r.t. \alp at one budget value. Without adaptive constraint, the static variant overshoots the workload budget by $>5\%$; such overspends may be unacceptable when compounded over long horizons or over repeated deployments. \alp avoids cost violation through adaptive LP, maintaining budget compliance without sacrificing reward.

\subsection{\efalp Experiment}
\label{sec:online_res}

\noindent\textbf{Performance Under End-to-End Budget.} 
We compare \efalp with offline methods under end-to-end budgets covering both data collection and deployment,  in two scenarios: \textit{generous} -- full training and maximum deployment budget and \textit{constrained} -- full training and minimum deployment budget (Figure~\ref{fig:epsalp_all}). In the generous setting, all offline methods converge to around $0.8$ average quality on RouterBench. \efalp reaches this with a single pass over training queries, reducing data collection cost by $90\%$. This suggests that marginal gains from additional supervision diminish rapidly. On SWE-Bench (Figure~\ref{fig:epsalp_combined_swebench}, Appendix~\ref{online-swe}), \efalp improves with more exploration, reflecting that this dataset requires longer exploration to accurately estimate reward statistics.
In the constrained setting, offline methods pre-commit training expense against deployment budget. \efalp reallocates unspent exploration budget to deployment, matching the generous-setting performance under the same end-to-end budget. Appendix~\ref{app:estimation_accuracy} validates that the exploration converges accurately to ground-truth statistics.

\noindent\textbf{Simulation Experiment for \efalp.}
To validate Theorem~\ref{thm:efalp-regret}, we run a controlled simulation under a global budget (Figure~\ref{fig:simulation}). \efalp's amortized regret decays faster than $O(1/\sqrt{T})$, consistent with Theorem~\ref{thm:efalp-regret}, and nearly matches $\epsilon$-first ALP with known costs, with the gap attributable to online cost estimation. 
Other algorithms under-spend significantly while \efalp tracks the $B/T$ slope closely, incurring lower regret. See Appendix~\ref{app:simulation} for detailed settings.

\section{Conclusions and Limitations}
\label{sec:conclue_limit}
We present \optiroute, a unified LLM routing framework that formulates routing as a constrained contextual bandit under a workload-level budget. \alp enforces budget via Adaptive Linear Programming and adapts to new budgets at inference without retraining; \efalp extends this to the data-scarce setting by jointly learning reward and cost statistics online under a common shared budget. We prove the end-to-end regret bound for \efalp covering exploration and exploitation, and demonstrate superior performance at tight budgets, precise budget utilization, and up to $90\%$ reduction in training data with \efalp.

\emph{Limitations.} 
The i.i.d.\ context assumption holds in stable API query workloads but may be violated when queries exhibit dependencies or distributional shift, in which case regret bound no longer applies; see more discussion in Appendix~\ref{app:assumptions}.
We only explored monetary cost as the constraint on two datasets; extending to other cost definitions and benchmarks with larger model pools is a natural direction for future work.

\bibliographystyle{plainnat}
\bibliography{references}

@inproceedings{kmeans1967,
  title={Some methods of classification and analysis of multivariate observations},
  author={McQueen, James B},
  booktitle={Proc. of 5th Berkeley Symposium on Math. Stat. and Prob.},
  pages={281--297},
  year={1967}
}

@INPROCEEDINGS{modelpruning1,
  author={Hassibi, B. and Stork, D.G. and Wolff, G.J.},
  booktitle={IEEE International Conference on Neural Networks}, 
  title={Optimal Brain Surgeon and general network pruning}, 
  year={1993},
  volume={},
  number={},
  pages={293-299 vol.1},
  doi={10.1109/ICNN.1993.298572}}

@inproceedings{modelpruning2,
  title = {Optimal Brain Damage},
  booktitle = {Advances in Neural Information Processing Systems},
  author = {LeCun, Yann and Denker, John and Solla, Sara},
  editor = {Touretzky, D.},
  year = 1989,
  volume = {2},
  publisher = {Morgan-Kaufmann}
}

@INPROCEEDINGS{quantization1,
  author={Jacob, Benoit and Kligys, Skirmantas and Chen, Bo and Zhu, Menglong and Tang, Matthew and Howard, Andrew and Adam, Hartwig and Kalenichenko, Dmitry},
  booktitle={2018 IEEE/CVF Conference on Computer Vision and Pattern Recognition}, 
  title={Quantization and Training of Neural Networks for Efficient Integer-Arithmetic-Only Inference}, 
  year={2018},
  volume={},
  number={},
  pages={2704-2713},
  doi={10.1109/CVPR.2018.00286}}

@inproceedings{quantization2,title	= {Improving the speed of neural networks on CPUs},author	= {Vincent Vanhoucke and Andrew Senior and Mark Z. Mao},year	= {2011},booktitle	= {Deep Learning and Unsupervised Feature Learning Workshop, NIPS 2011}}

@inproceedings{moe1,
  author       = {Noam Shazeer and
                  Azalia Mirhoseini and
                  Krzysztof Maziarz and
                  Andy Davis and
                  Quoc V. Le and
                  Geoffrey E. Hinton and
                  Jeff Dean},
  title        = {Outrageously Large Neural Networks: The Sparsely-Gated Mixture-of-Experts
                  Layer},
  booktitle    = {5th International Conference on Learning Representations, {ICLR} 2017,
                  Toulon, France, April 24-26, 2017, Conference Track Proceedings},
  publisher    = {OpenReview.net},
  year         = {2017},
  url          = {https://openreview.net/forum?id=B1ckMDqlg},
}

@article{moe2,
author = {Fedus, William and Zoph, Barret and Shazeer, Noam},
title = {Switch transformers: scaling to trillion parameter models with simple and efficient sparsity},
year = {2022},
issue_date = {January 2022},
publisher = {JMLR.org},
volume = {23},
number = {1},
issn = {1532-4435},
journal = {J. Mach. Learn. Res.},
month = jan,
articleno = {120},
numpages = {39}
}

@article{ucb,
author = {Auer, Peter and Cesa-Bianchi, Nicol\`{o} and Fischer, Paul},
title = {Finite-time Analysis of the Multiarmed Bandit Problem},
year = {2002},
issue_date = {May-June 2002},
publisher = {Kluwer Academic Publishers},
address = {USA},
volume = {47},
number = {2–3},
issn = {0885-6125},
url = {https://doi.org/10.1023/A:1013689704352},
doi = {10.1023/A:1013689704352},
journal = {Mach. Learn.},
month = may,
pages = {235–256},
numpages = {22}
}

@article{ef-budget-MAB, title={Epsilon–First Policies for Budget–Limited Multi-Armed Bandits}, volume={24}, DOI={10.1609/aaai.v24i1.7758},  number={1}, journal={Proceedings of the AAAI Conference on Artificial Intelligence}, author={Tran-Thanh, Long and Chapman, Archie and Munoz de Cote, Enrique and Rogers, Alex and Jennings, Nicholas R.}, year={2010}, month={Jul.}, pages={1211-1216} }

@inproceedings{budget-MAB-variable-cost,
author = {Ding, Wenkui and Qiny, Tao and Zhang, Xu-Dong and Liu, Tie-Yan},
title = {Multi-armed bandit with budget constraint and variable costs},
year = {2013},
publisher = {AAAI Press},
booktitle = {Proceedings of the Twenty-Seventh AAAI Conference on Artificial Intelligence},
pages = {232–238},
numpages = {7},
location = {Bellevue, Washington},
series = {AAAI'13}
}

@inproceedings{ucb-alp,
author = {Wu, Huasen and Srikant, R. and Liu, Xin and Jiang, Chong},
title = {Algorithms with logarithmic or sublinear regret for constrained contextual bandits},
year = {2015},
publisher = {MIT Press},
address = {Cambridge, MA, USA},
booktitle = {Proceedings of the 29th International Conference on Neural Information Processing Systems - Volume 1},
pages = {433–441},
numpages = {9},
location = {Montreal, Canada},
series = {NIPS'15}
}

@book{SuttonRL,
author = {Sutton, Richard S. and Barto, Andrew G.},
title = {Reinforcement Learning: An Introduction},
year = {2018},
isbn = {0262039249},
publisher = {A Bradford Book},
address = {Cambridge, MA, USA},
}

@InProceedings{resourcefulbandit,
  title = 	 {Resourceful Contextual Bandits},
  author = 	 {Badanidiyuru, Ashwinkumar and Langford, John and Slivkins, Aleksandrs},
  booktitle = 	 {Proceedings of The 27th Conference on Learning Theory},
  pages = 	 {1109--1134},
  year = 	 {2014},
  editor = 	 {Balcan, Maria Florina and Feldman, Vitaly and Szepesvári, Csaba},
  volume = 	 {35},
  series = 	 {Proceedings of Machine Learning Research},
  address = 	 {Barcelona, Spain},
  month = 	 {13--15 Jun},
  publisher =    {PMLR},	 
}

@inproceedings{linucb,
author = {Li, Lihong and Chu, Wei and Langford, John and Schapire, Robert E.},
title = {A contextual-bandit approach to personalized news article recommendation},
year = {2010},
isbn = {9781605587998},
publisher = {Association for Computing Machinery},
address = {New York, NY, USA},
doi = {10.1145/1772690.1772758},
booktitle = {Proceedings of the 19th International Conference on World Wide Web},
pages = {661–670},
numpages = {10},
location = {Raleigh, North Carolina, USA},
series = {WWW '10}
}

@article{HELM23,
  title = {Holistic Evaluation of Language Models},
  author = {Liang, Percy and Bommasani, Rishi and Lee, Tony and Tsipras, Dimitris and Soylu, Dilara and Yasunaga, Michihiro and Zhang, Yian and Narayanan, Deepak and Wu, Yuhuai and Kumar, Ananya and Newman, Benjamin and Yuan, Binhang and Yan, Bobby and Zhang, Ce and Cosgrove, Christian and Manning, Christopher D and Re, Christopher and {Acosta-Navas}, Diana and Hudson, Drew A. and Zelikman, Eric and Durmus, Esin and Ladhak, Faisal and Rong, Frieda and Ren, Hongyu and Yao, Huaxiu and WANG, Jue and Santhanam, Keshav and Orr, Laurel and Zheng, Lucia and Yuksekgonul, Mert and Suzgun, Mirac and Kim, Nathan and Guha, Neel and Chatterji, Niladri S. and Khattab, Omar and Henderson, Peter and Huang, Qian and Chi, Ryan Andrew and Xie, Sang Michael and Santurkar, Shibani and Ganguli, Surya and Hashimoto, Tatsunori and Icard, Thomas and Zhang, Tianyi and Chaudhary, Vishrav and Wang, William and Li, Xuechen and Mai, Yifan and Zhang, Yuhui and Koreeda, Yuta},
  year = 2023,
  journal = {Transactions on Machine Learning Research},
  issn = {2835-8856}
}

@inproceedings{s3,
author = {Jin, Yunho and Wu, Chun-Feng and Brooks, David and Wei, Gu-Yeon},
title = {S3: increasing GPU utilization during generative inference for higher throughput},
year = {2023},
publisher = {Curran Associates Inc.},
address = {Red Hook, NY, USA},
booktitle = {Proceedings of the 37th International Conference on Neural Information Processing Systems},
articleno = {791},
numpages = {13},
location = {New Orleans, LA, USA},
series = {NIPS '23}
}

@inproceedings{speculativedecode,
  title={Fast inference from transformers via speculative decoding},
  author={Leviathan, Yaniv and Kalman, Matan and Matias, Yossi},
  booktitle={International Conference on Machine Learning},
  pages={19274--19286},
  year={2023},
  organization={PMLR}
}

@article{speculativedecode2,
  title={Accelerating large language model decoding with speculative sampling},
  author={Chen, Charlie and Borgeaud, Sebastian and Irving, Geoffrey and Lespiau, Jean-Baptiste and Sifre, Laurent and Jumper, John},
  journal={arXiv preprint arXiv:2302.01318},
  year={2023}
}

@misc{metallm,
      title={MetaLLM: A High-performant and Cost-efficient Dynamic Framework for Wrapping LLMs}, 
      author={Quang H. Nguyen and Thinh Dao and Duy C. Hoang and Juliette Decugis and Saurav Manchanda and Nitesh V. Chawla and Khoa D. Doan},
      year={2025},
      eprint={2407.10834},
      archivePrefix={arXiv},
      primaryClass={cs.LG},
      url={https://arxiv.org/abs/2407.10834}, 
}

@inproceedings{
hybridllm,
title={Hybrid {LLM}: Cost-Efficient and Quality-Aware Query Routing},
author={Dujian Ding and Ankur Mallick and Chi Wang and Robert Sim and Subhabrata Mukherjee and Victor R{\"u}hle and Laks V. S. Lakshmanan and Ahmed Hassan Awadallah},
booktitle={The Twelfth International Conference on Learning Representations},
year={2024},
url={https://openreview.net/forum?id=02f3mUtqnM}
}

@article{routerbench,
  author       = {Qitian Jason Hu and
                  Jacob Bieker and
                  Xiuyu Li and
                  Nan Jiang and
                  Benjamin Keigwin and
                  Gaurav Ranganath and
                  Kurt Keutzer and
                  Shriyash Kaustubh Upadhyay},
  title        = {RouterBench: {A} Benchmark for Multi-LLM Routing System},
  journal      = {CoRR},
  volume       = {abs/2403.12031},
  year         = {2024},
  url          = {https://doi.org/10.48550/arXiv.2403.12031},
  doi          = {10.48550/ARXIV.2403.12031},
  eprinttype    = {arXiv},
  eprint       = {2403.12031},
}

@inproceedings{routellm,
title={Route{LLM}: Learning to Route {LLM}s from Preference Data},
author={Isaac Ong and Amjad Almahairi and Vincent Wu and Wei-Lin Chiang and Tianhao Wu and Joseph E. Gonzalez and M Waleed Kadous and Ion Stoica},
booktitle={The Thirteenth International Conference on Learning Representations},
year={2025},
url={https://openreview.net/forum?id=8sSqNntaMr}
}

@inproceedings{mixLLM2025,
  title = {{{MixLLM}}: {{Dynamic Routing}} in {{Mixed Large Language Models}}},
  shorttitle = {{{MixLLM}}},
  booktitle = {Proceedings of the 2025 {{Conference}} of the {{Nations}} of the {{Americas Chapter}} of the {{Association}} for {{Computational Linguistics}}: {{Human Language Technologies}} ({{Volume}} 1: {{Long Papers}})},
  author = {Wang, Xinyuan and Liu, Yanchi and Cheng, Wei and Zhao, Xujiang and Chen, Zhengzhang and Yu, Wenchao and Fu, Yanjie and Chen, Haifeng},
  editor = {Chiruzzo, Luis and Ritter, Alan and Wang, Lu},
  year = {2025},
  month = apr,
  pages = {10912--10922},
  publisher = {Association for Computational Linguistics},
  address = {Albuquerque, New Mexico},
  urldate = {2025-05-13},
  isbn = {979-8-89176-189-6},
}

@inproceedings{OnlineMultiLLMSelection2025,
  title={{Online Multi-LLM Selection via Contextual Bandits under Unstructured Context Evolution}},
  author={Poon, Manhin and Dai, XiangXiang and Liu, Xutong and Kong, Fang and Lui, John C.S. and Zuo, Jinhang},
  booktitle={Proceedings of the AAAI Conference on Artificial Intelligence},
  volume={40},
  year={2026}
}

@article{frugalgpt2024,
  title = {{{FrugalGPT}}: {{How}} to Use Large Language Models While Reducing Cost and Improving Performance},
  author = {Chen, Lingjiao and Zaharia, Matei and Zou, James},
  year = {2024},
  journal = {Transactions on Machine Learning Research},
  issn = {2835-8856}
}

@inproceedings{
UnifiedApproachRouting2025,
title={A Unified Approach to Routing and Cascading for {LLM}s},
author={Jasper Dekoninck and Maximilian Baader and Martin Vechev},
booktitle={Forty-second International Conference on Machine Learning},
year={2025},
url={https://openreview.net/forum?id=AAl89VNNy1}
}

@inproceedings{deepen,
  title={Ensemble Learning for Heterogeneous Large Language Models with Deep Parallel Collaboration},
  author={Yi-Chong Huang and Xiaocheng Feng and Baohang Li and Yang Xiang and Hui Wang and Bing Qin and Ting Liu},
  booktitle={Neural Information Processing Systems},
  year={2024},
  url={https://api.semanticscholar.org/CorpusID:269282634}
}

@article{thriftllm,
author = {Huang, Keke and Shi, Yimin and Ding, Dujian and Li, Yifei and Fei, Yang and Lakshmanan, Laks and Xiao, Xiaokui},
title = {ThriftLLM: On Cost-Effective Selection of Large Language Models for Classification Queries},
year = {2025},
issue_date = {July 2025},
publisher = {VLDB Endowment},
volume = {18},
number = {11},
issn = {2150-8097},
url = {https://doi.org/10.14778/3749646.3749702},
doi = {10.14778/3749646.3749702},
journal = {Proc. VLDB Endow.},
month = sep,
pages = {4410–4423},
numpages = {14}
}

@inproceedings{
pyramidkv,
title={Pyramid{KV}: Dynamic {KV} Cache Compression based on Pyramidal Information Funneling},
author={Zefan Cai and Yichi Zhang and Bofei Gao and Yuliang Liu and Yucheng Li and Tianyu Liu and Keming Lu and Wayne Xiong and Yue Dong and Junjie Hu and Wen Xiao},
booktitle={Second Conference on Language Modeling},
year={2025},
url={https://openreview.net/forum?id=ayi7qezU87}
}

@inproceedings{llmblender,
    title = "{LLM}-Blender: Ensembling Large Language Models with Pairwise Ranking and Generative Fusion",
    author = "Jiang, Dongfu  and
      Ren, Xiang  and
      Lin, Bill Yuchen",
    editor = "Rogers, Anna  and
      Boyd-Graber, Jordan  and
      Okazaki, Naoaki",
    booktitle = "Proceedings of the 61st Annual Meeting of the Association for Computational Linguistics (Volume 1: Long Papers)",
    month = jul,
    year = "2023",
    address = "Toronto, Canada",
    publisher = "Association for Computational Linguistics",
    doi = "10.18653/v1/2023.acl-long.792",
    pages = "14165--14178",
}

@inproceedings{zooter,
    title = "Routing to the Expert: Efficient Reward-guided Ensemble of Large Language Models",
    author = "Lu, Keming  and
      Yuan, Hongyi  and
      Lin, Runji  and
      Lin, Junyang  and
      Yuan, Zheng  and
      Zhou, Chang  and
      Zhou, Jingren",
    editor = "Duh, Kevin  and
      Gomez, Helena  and
      Bethard, Steven",
    booktitle = "Proceedings of the 2024 Conference of the North American Chapter of the Association for Computational Linguistics: Human Language Technologies (Volume 1: Long Papers)",
    month = jun,
    year = "2024",
    address = "Mexico City, Mexico",
    publisher = "Association for Computational Linguistics",
    doi = "10.18653/v1/2024.naacl-long.109",
    pages = "1964--1974",
}

@inproceedings{
    swebench,
    title={{SWE}-bench: Can Language Models Resolve Real-world Github Issues?},
    author={Carlos E Jimenez and John Yang and Alexander Wettig and Shunyu Yao and Kexin Pei and Ofir Press and Karthik R Narasimhan},
    booktitle={The Twelfth International Conference on Learning Representations},
    year={2024},
    url={https://openreview.net/forum?id=VTF8yNQM66}
}

@inproceedings{chatbotArena24,
author = {Chiang, Wei-Lin and Zheng, Lianmin and Sheng, Ying and Angelopoulos, Anastasios N. and Li, Tianle and Li, Dacheng and Zhu, Banghua and Zhang, Hao and Jordan, Michael I. and Gonzalez, Joseph E. and Stoica, Ion},
title = {Chatbot arena: an open platform for evaluating LLMs by human preference},
year = {2024},
publisher = {JMLR.org},
booktitle = {Proceedings of the 41st International Conference on Machine Learning},
articleno = {331},
numpages = {30},
location = {Vienna, Austria},
series = {ICML'24}
}

@inproceedings{bestroute,
title={{BEST}-Route: Adaptive {LLM} Routing with Test-Time Optimal Compute},
author={Dujian Ding and Ankur Mallick and Shaokun Zhang and Chi Wang and Daniel Madrigal and Mirian Del Carmen Hipolito Garcia and Menglin Xia and Laks V. S. Lakshmanan and Qingyun Wu and Victor R{\"u}hle},
booktitle={Forty-second International Conference on Machine Learning},
year={2025},
url={https://openreview.net/forum?id=tFBIbCVXkG}
}

@article{omnirouter25,
author = {Mei, Kai and Xu, Wujiang and Guo, Minghao and Lin, Shuhang and Zhang, Yongfeng},
title = {OmniRouter: Budget and Performance Controllable Multi-LLM Routing},
year = {2026},
issue_date = {December 2025},
publisher = {Association for Computing Machinery},
address = {New York, NY, USA},
volume = {27},
number = {2},
issn = {1931-0145},
url = {https://doi.org/10.1145/3787470.3787480},
doi = {10.1145/3787470.3787480},
journal = {SIGKDD Explor. Newsl.},
month = dec,
pages = {107–116},
numpages = {10}
}

@inproceedings{cascadeMixoT24,
title={Large Language Model Cascades with Mixture of Thought Representations for Cost-Efficient Reasoning},
author={Murong Yue and Jie Zhao and Min Zhang and Liang Du and Ziyu Yao},
booktitle={The Twelfth International Conference on Learning Representations},
year={2024},
url={https://openreview.net/forum?id=6okaSfANzh}
}

@inproceedings{cascadeToken24,
title={Language Model Cascades: Token-Level Uncertainty And Beyond},
author={Neha Gupta and Harikrishna Narasimhan and Wittawat Jitkrittum and Ankit Singh Rawat and Aditya Krishna Menon and Sanjiv Kumar},
booktitle={The Twelfth International Conference on Learning Representations},
year={2024},
url={https://openreview.net/forum?id=KgaBScZ4VI}
}

@inproceedings{pilot2025,
    title = "Adaptive {LLM} Routing under Budget Constraints",
    author = "Panda, Pranoy  and
      Magazine, Raghav  and
      Devaguptapu, Chaitanya  and
      Takemori, Sho  and
      Sharma, Vishal",
    editor = "Christodoulopoulos, Christos  and
      Chakraborty, Tanmoy  and
      Rose, Carolyn  and
      Peng, Violet",
    booktitle = "Findings of the Association for Computational Linguistics: EMNLP 2025",
    month = nov,
    year = "2025",
    address = "Suzhou, China",
    publisher = "Association for Computational Linguistics",
    doi = "10.18653/v1/2025.findings-emnlp.1301",
    pages = "23934--23949",
    ISBN = "979-8-89176-335-7",
}

@inproceedings{livebench25,
  title={LiveBench: A Challenging, Contamination-Free {LLM} Benchmark},
  author={Colin White and Samuel Dooley and Manley Roberts and Arka Pal and Benjamin Feuer and Siddhartha Jain and Ravid Shwartz-Ziv and Neel Jain and Khalid Saifullah and Sreemanti Dey and Shubh-Agrawal and Sandeep Singh Sandha and Siddartha Venkat Naidu and Chinmay Hegde and Yann LeCun and Tom Goldstein and Willie Neiswanger and Micah Goldblum},
  booktitle={The Thirteenth International Conference on Learning Representations},
  year={2025},
}

@misc{claude_team_pricing,
  author       = {{Anthropic}},
  title        = {{Claude Team}},
  howpublished = {\url{https://claude.com/pricing/team}},
  note         = {Accessed: 2026-04-28},
  year         = {2026}
}

@misc{openai_api_pricing,
  author       = {{OpenAI}},
  title        = {{API Pricing}},
  howpublished = {\url{https://openai.com/api/pricing/}},
  note         = {Accessed: 2026-04-28},
  year         = {2026}
}

%%%%%%%%%%%%%%%%%%%%%%%%%%%%%%%%%%%%%%%%%%%%%%%%%%%%%%%%%%%%
\newpage
\appendix

\onecolumn
\newpage
\addcontentsline{toc}{section}{Appendix}

\startcontents[app]
\clearpage
\phantomsection

\printcontents[app]{l}{1}{\section*{Appendix contents}}

\newpage
%% ============================================================
\section{Discussion of Assumptions}
\label{app:assumptions} 
We elaborate on the three assumptions introduced in Section~\ref{subsec:formulation} and discuss their practical validity.

\paragraph{I.i.d.\ contexts.} 
There are two distinct aspects of this assumption. First, \emph{query independence}: queries arrive without state transitions between them, enabling the contextual bandit formulation. This holds in the majority of real-world LLM API usage scenarios (e.g., document summarization, customer support all treat each query as a self-contained request), and is implicit in all supervised routing baselines, which also treat queries independently. Second, \emph{stationarity}: the context distribution $\{\pi_j\}$ is assumed stable within the planning horizon $T$. Setting $T$ to a window that captures recurring traffic patterns (e.g., weekly business-hours cycles) makes this reasonable in practice. For longer-term drift, the context distribution and LP can be easily updated from recent traffic without re-exploring model statistics.
% Queries are drawn independently from a stationary distribution $\{\pi_j\}$, estimated via clustering from a historical set (Section~\ref{subsec: clustering}). This holds when queries arrive from a stable user population. In settings with distributional shift, the centroids and context distribution can be re-estimated from recent query history and the LP re-solved without retraining the encoder.

\paragraph{Finite horizon $T$.} 
A finite $T$ is standard in the constrained contextual bandit literature as it makes regret analysis tractable; $T$ can be set arbitrarily large, and the analysis characterizes how regret grows as a function of $T$. In practice, $T$ is calibrated to the deployment window (e.g., a monthly API quota) using historical query volume estimates.

% Both \alp and \efalp require $T$ to compute the per-round budget ratio $b/\tau$. In practice, $T$ corresponds to a planning window such as a monthly API quota and can be estimated from historical traffic logs.

\paragraph{Feedback acquisition.} 
The monetary cost $S_t$ is available immediately via token-count pricing. The reward $Y_t$ is not restricted to ground-truth correctness: any observable scalar signal like proxy reward model scores, user thumbs-up/down feedback, or code accept rates can qualify. In \alp, $Y_t$ is read from $\mathcal{D}_{\mathrm{hist}}$ and is not needed at inference time. In \efalp, $Y_t$ is used during exploration to update $\bar{u}_{j,k}$; per-cluster averaging over queries naturally absorbs observation noise, and delayed reward updates can be deferred without interrupting routing decisions. 
% The monetary cost \(S_t\) is available as soon as generation completes via token-count pricing. In \alp, the quality score $Y_t$ is read from $\mathcal{D}_{\mathrm{hist}}$ and is not needed at inference time. In \efalp, $Y_t$ is used during exploration to update $\bar{u}_{j,k}$; however, it does not block the routing decision itself, which depends only on the visit counts $N_{j,k}$. For automated evaluations (e.g., unit-test pass rates), $Y_t$ is available immediately; in settings with delayed feedback, estimate updates can be deferred without interrupting routing.
\section{Related Work}
\label{app:related_work}

We expand on the related work highlights in Section~\ref{sec:related_work} with inference-time LLM selection and constrained bandit methods.

\paragraph{Inference-time LLM selection.} Motivated by the notable cost quality heterogeneity across today’s LLMs, recent work on efficient LLM inference has developed into three paradigms: routing, cascading and ensembling. 

\textbf{LLM Routing} aims to learn a policy that routes a query to the most suitable model from a pool of candidate LLMs. HybridLLM~\cite{hybridllm} and RouteLLM~\cite{routellm} focus on a two-model setting, train a router that preserves quality while reducing cost by sending easy queries to a smaller model. Subsequent work such as Zooter~\cite{zooter} and MixLLM~\cite{mixLLM2025} extends this supervise-trained router idea to larger model pools and explicitly manages the quality-cost trade-off. OmniRouter~\cite{omnirouter25} predicts cost and performance for each model-query pair using a retrieval-augmented predictor, then uses Lagrangian dual optimization to minimize global cost subject to a global performance threshold. This predictor requires dense historical model-query pair statistics, collecting which can be expensive. BestRoute~\cite{bestroute} explores best-of-$n$ sampling to better exploit test-time compute with smaller models and shows that this leads to significant reduction in cost compared to baselines. The approach is essentially a heuristic and lacks formal guarantees. Our work focuses on the routing setting where one model is called once per query with an explicit workload-level budget, and applies online learning with exploration when dense supervision dataset is too expensive to collect.

\textbf{LLM Cascading} method~\cite{frugalgpt2024, cascadeMixoT24, cascadeToken24} executes a pre-defined LLM sequence: a small, cheap model answers first and the system escalates to larger ones  when a learned threshold criterion fails; %recent unified formulations 
Dekoninck et al. \cite{UnifiedApproachRouting2025} cast routing and cascading within one framework and enforce explicit per-query budget. While they derive optimality results, per-query constraints can be myopic and suboptimal under a workload-level budget. 

\textbf{LLM Ensemble} methods such as DeePEn~\cite{deepen} aligns model logits in a shared space and aggregates them at the token level, while LLMBlender~\cite{llmblender} ensembles post hoc by training a T5-based model to fuse multiple model outputs into a refined response. These methods primarily optimize accuracy without explicit cost or latency constraints. ThriftLLM~\cite{thriftllm}  enforces per-query budget %constraints 
and achieves cost-efficient classification. Given our focus on single-model LLM routing with workload-level budget, we do not compare with these methods further. 
\eat{DeePEn~\cite{deepen} aligns model logits in a shared space and aggregates them at the token level, while LLMBlender~\cite{llmblender} ensembles post hoc by training a T5-based model to fuse multiple model outputs into a refined response. These methods primarily optimize accuracy without explicit cost or latency constraints.
In contrast, ThriftLLM~\cite{thriftllm} enforces per-query budget constraints, %using weighted aggregation to 
and achieves cost-efficient classification. However, these methods often call  more than one model per query and introduce additional thresholding check or aggregation step which can  increase the overall latency.}

\paragraph{Constrained Contextual Bandits.}
Contextual bandits extends the well-known multi-armed bandits (MAB)  by utilizing additional context at each round, allowing the agent to choose an action conditioned on the context. Common baselines include $\epsilon$-greedy~\cite{SuttonRL}, 
UCB~\cite{ucb}, and LinUCB~\cite{linucb}. 

In a constrained bandit setting, an agent optimizes the cumulative reward subject to a global budget constraint. A well-known strategy for budget-constrained MAB without contexts is $\epsilon$-first ~\cite{ef-budget-MAB}. \citet{resourcefulbandit} studies budget constrained contextual bandits and establishes an $\tilde O(\sqrt{T})$ regret bound; however, their approach is  computationally expensive in practice. Following this line, \cite{ucb-alp} proposes UCB-ALP for unit-cost systems where all arms have a uniform cost and $\epsilon$-first ALP for heterogeneous-cost systems, though rely on cost statistics being known a priori. Another line of work studies stochastic costs under budget constrained MAB using UCB-style frameworks~\cite{budget-MAB-variable-cost}. Our work extends $\epsilon$-first ALP to the regime where both reward and cost must be estimated online under a single shared budget covering exploration and exploitation.

\paragraph{LLM Routing as a Bandit Problem.}
Several recent studies explicitly formulate LLM routing as a contextual bandits problem: each candidate model is an action, query embeddings provide context, and reward is reflected by task performance or human preference. MetaLLM~\cite{metallm} uses UCB to select an LLM per query, leveraging reward estimates learned from offline preference data to balance cost against performance. MixLLM~\cite{mixLLM2025} integrates contextual bandits and selects a model by a LinUCB-like score augmented by uncertainty and a latency penalty. \citet{OnlineMultiLLMSelection2025} applies Greedy LinUCB to the context evolved from multi-step chat interaction under global budget. 
PILOT~\cite{PILOT2025} initializes a LinUCB router from preference-aligned embeddings and enforces budget at deployment via an online knapsack solver; its regret analysis covers only the router learning phase, not the full exploration-to-deployment pipeline. All these works either apply budget constraints only at deployment or lack end-to-end regret guarantees jointly covering learning and routing.
\section{Proof of $\widehat{U}(T,B)$ is Upper Bound}
\label{app:upperbound}

\begin{lemma}
\label{lemma:upperbound}
For a heterogeneous cost system with $K$ models, $J$ discrete contexts and known true statistics, if the time-horizon is $T$ and the budget is $B$, then the total expected reward $\widehat{U}(T,B) \geq U^*(T,B)$, where $\hat{U}(T,B) = T v(\rho)$.
\end{lemma}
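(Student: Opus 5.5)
The plan is the standard LP-relaxation argument of \citet{ucb-alp}. Here $v(\rho)$ is the optimal value of $\mathrm{LP}_{T,B}$, i.e.\ the program \eqref{eq:lp-obj}--\eqref{eq:lp-prob} with right-hand side $\rho=B/T$ and the true statistics $u_{j,k},c_{j,k}$. We fix an arbitrary (possibly history-dependent, randomized) policy $\Psi$ that satisfies the expected budget constraint $\mathbb{E}[\sum_{t=1}^T S_t]\le B$. We compress it into a single stationary randomized policy and show that this policy is feasible for $\mathrm{LP}_{T,B}$ and attains objective value $U_\Psi(T,B)/T$. Taking the supremum over $\Psi$ then gives $U^*(T,B)\le T v(\rho)=\widehat U(T,B)$.

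\textbf{Averaged policy.} For each context $j$ and action $k\in\{0\}\cup\mathcal K$, define
\[
\tilde p_{j,k} := \frac{1}{T}\sum_{t=1}^T \mathbb{P}(A_t=k\mid X_t=j),
\]
which is well defined since $\pi_j>0$. Each summand is a probability vector over $\{0\}\cup\mathcal K$, so $\tilde p_{j,k}\in[0,1]$ and $\sum_{k=1}^K \tilde p_{j,k}\le 1$; the null action absorbs any remaining mass. Thus \eqref{eq:lp-prob} holds.

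\textbf{Expected reward and cost.} The key fact is that contexts are i.i.d.\ with law $\pi$, and $X_t$ is independent of the history $\mathcal H_{t-1}$. The reward and cost of pair $(j,k)$ have conditional means $u_{j,k},c_{j,k}$ given $X_t=j$, $A_t=k$ and $\mathcal H_{t-1}$. The policy's action depends only on $(\mathcal H_{t-1},X_t)$ and internal randomness. By the tower rule,
\[
\mathbb{E}[Y_t]=\sum_j \pi_j\sum_k \mathbb{P}(A_t=k\mid X_t=j)\,u_{j,k},
\]
and the analogous identity holds for $S_t$ with $c_{j,k}$. Summing over $t$ and dividing by $T$ gives $U_\Psi(T,B)/T=\sum_j\pi_j\sum_k\tilde p_{j,k}u_{j,k}$. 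The cost identity combined with the expected budget constraint gives $\sum_j\pi_j\sum_k\tilde p_{j,k}c_{j,k}\le B/T=\rho$, which is \eqref{eq:lp-budget}. Hence $\tilde p$ is feasible for $\mathrm{LP}_{T,B}$, so $U_\Psi(T,B)/T\le v(\rho)$.

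\textbf{Main obstacle: the stopping rule.} The delicate step is the conditioning argument when the algorithm stops once $b\le 0$. After stopping we treat the policy as taking the null action $k=0$, with zero reward and zero cost. This keeps $A_t$ a function of $(\mathcal H_{t-1},X_t,\text{randomness})$, so the stopping is absorbed into the per-round action distribution. I would verify the conditional-expectation identity carefully: $\mathbb{E}[Y_t\mid \mathcal H_{t-1},X_t=j,A_t=k]=u_{j,k}$ requires the per-round outcome to be drawn independently of the past given $(j,k)$. I would also note that the budget is imposed only in expectation, which is exactly what the LP relaxes. If instead a hard constraint $\sum_t S_t\le B$ were imposed, it would imply the expectation constraint, so the same bound holds a fortiori.
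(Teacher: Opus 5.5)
Your proposal is correct and is essentially the paper's argument: your averaged probabilities $\tilde p_{j,k}=\frac{1}{T}\sum_{t}\mathbb{P}(A_t=k\mid X_t=j)$ coincide with the paper's $p_{j,k}=\mathbb{E}[N_{j,k}]/(\pi_j T)$, because $\mathbb{E}[N_{j,k}]=\sum_t \pi_j\,\mathbb{P}(A_t=k\mid X_t=j)$, and both proofs show this vector is feasible for $\mathrm{LP}_{T,B}$ with objective equal to the policy's per-round expected reward. Your version is slightly more careful than the paper's: it explicitly checks the intra-context constraint $\sum_{k}\tilde p_{j,k}\le 1$ and handles early stopping through the null action, both of which the paper leaves implicit.
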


\begin{proof}
Let $N_{j,k}$ be the number of rounds that an action $k$ is taken under context $j$ for any realization under any feasible algorithm with known statistics. Let 
$p_{j,k} := \mathbb{E}[N_{j,k}]/(\pi_j T)$, which satisfies $0 \le p_{j,k} \le 1$. 
Then the expected total reward for this feasible algorithm becomes 
$$
\sum_{j=1}^J \sum_{k=1}^K u_{j,k} \mathbb{E}[N_{j,k}] = T \sum_{j=1}^J \sum_{k=1}^K p_{j,k} \pi_j u_{j,k}.
$$

% $\sum_{j=1}^J \sum_{k=1}^K N_{j,k} c_{j,k} \le B$
Moreover, because we start with a feasible algorithm, the expected budget constraint is met for all realizations, i.e., $\mathbb{E}\left[\sum_{t=1}^T S_t\right] \le B$, we have 
\[
\mathbb{E}\left[\sum_{t=1}^T S_t\right]
= \sum_{j=1}^J \sum_{k=1}^K \mathbb{E}\!\left[ \sum_{t=1}^T S_t \mathbbm{1}\{X_t=j,A_t=k\}\right]
= \sum_{j,k} c_{j,k}\,\mathbb{E}[N_{j,k}]
= T \sum_j \pi_j \sum_k p_{j,k} c_{j,k}.
\]
Feasibility in expectation implies
\[
\sum_j \pi_j \sum_k p_{j,k} c_{j,k} \le B/T.
\]

Thus, the expected total reward obtained by any feasible algorithm, including the oracle algorithm, is upper bounded by $\widehat{U}(T,B)$.
\end{proof}
\section{Detailed Regret Bound of ALP}
\label{app:detial_alp_regret}

To derive this upper bound, we need to derive the explicit solution of the upper bounding linear program $\mathrm{LP}_{T,B}$ and analyze the regret incurred when $b/\tau$ deviates from $B/T$. The details of this derivation, following the approach in~\cite{ucb-alp}, are provided in Appendix~\ref{app:solution}. Here, we introduce the essential concepts and quantities that will be used in the main regret bound of Theorem~\ref{thm:alp_regret}.

\textbf{Deriving explicit LP solution.}
The goal is to rank all pairs globally by their reward rates, then apply greedy algorithm to prioritize pairs with the highest reward rates and select those actions with corresponding probabilities. 

We first prune the candidate sets of context-action pairs using two measurements. The reward rate $\eta_{j,k} = \frac{u_{j,k}}{c_{j,k}}$ measures expected utility per unit cost for pair $(j,k)$. The ratio
$\xi_{j,k_1,k_2}
:=\frac{u_{j,k_1}-u_{j,k_2}}{c_{j,k_1}-c_{j,k_2}}, k_1\neq k_2,$ measures the additional expected reward per additional unit cost when switching from $k_2$ to $k_1$ under context $j$.
We can remove certain suboptimal pairs without decreasing the LP optimum by comparing $\eta_{j,k}$ and $\xi_{j, k_1, k_2}$ (exact conditions in Appendix~\ref{app:solution}). After pruning, the remaining actions within each context admit a consistent ordering w.r.t. $\xi_{j,k_1,k_2}$. 

We then define a transformation detailed in Appendix~\ref{app:solution} to obtain an equivalent linear program $\mathrm{\widetilde{LP}}_{T,B}$ such that the remaining candidate pairs can be globally sorted by a transformed marginal reward ratio $\tilde \eta_{j,k}$. The key requirement for this reduction is that for any context $j$, the ordered ratios $\xi_{j,k_1,k_2}$
lead to a consistent ranking of $\tilde\eta_{j,k}$. Let the pairs sorted in descending order of $\tilde\eta_{j,k}$ be $\{(j^{(1)},k^{(1)}),\dots,(j^{(M)},k^{(M)})\}$, where $M$ is the total number of remaining pairs after pruning. We  denote $\tilde\eta_1:=\max_i \tilde\eta_{j^{(i)},k^{(i)}}$ and $\tilde\eta_M:=\min_i \tilde\eta_{j^{(i)},k^{(i)}}$ accordingly.

\textbf{Boundary and non-boundary cases.}
The performance of ALP depends on how the average remaining budget $\frac{b}{\tau}$ evolves relative to the static $B/T$. In particular, the regret is affected by how often $b/\tau$ crosses certain critical thresholds associated with the sorted transformed pairs.

We define the cumulative expected cost mass up to index $i$ as $Q_i:=\sum_{i'=1}^{i} \pi_{j^{(i')}}\,\tilde c_{j^{(i')},k^{(i')}}$, representing the total expected cost for the first $i$ transformed items in the ordering $M$. For any static per-round budget $\rho\in(0,Q]$, we define the threshold index $\tilde i(\rho):=\max\{i\in\{0,1,\dots,M\}: Q_i\le \rho\,\}.$ We say that $\rho$ is \textit{non-boundary} if $\rho\neq Q_i$ for all $i\in\{1,\dots,M\}$ and \textit{boundary} if $\rho = Q_{i(\rho)}$ for some $ i(\rho)$.

\textbf{Concentration of $b$.}
 Since regret mainly occurs when the empirical average remaining budget $b/\tau$ deviates from $\rho$, we apply Lemma~12 of~\cite{ucb-alp} based on the Azuma--Hoeffding inequality to obtain the tail bound $\mathbb{P}\{|b/\tau-\rho|<\delta\}\geq e^{-\kappa\delta^2\tau}, 0<\delta<1$
for some constant $\kappa = \frac{\rho^2}{2}$, which controls the probability of large deviations.

The following theorem states that the ALP algorithm has a $O(1)$ regret in the non-boundary case, and incurs an $O(\sqrt{T})$ regret in the boundary case.
\begin{theorem}
\label{thm:alp_regret}

Given any fixed $\rho \in (0,1)$, there exists a positive constant $\kappa := \frac{\rho^2}{2}$, such that the regret of ALP satisfies:

\begin{enumerate}
    \item[(1)]  \textbf{(Non-boundary cases)} If $\rho \neq Q_{i}$ for any $i \in \{1,2,\ldots,M\}$, then
    $
    R_{\text{ALP}}(T,B) \leq \frac{\tilde{\eta}_1 - \tilde{\eta}_M}{1 - e^{-2\kappa \delta^2}},
    $
    where $\delta = \min\{\rho - Q_{\tilde{i}(\rho)}, Q_{\tilde{i}(\rho)+1} - \rho\}$.
    
    \item[(2)]  \textbf{(Boundary cases)} If $\rho = Q_{i(\rho)}$ for some $i \in \{1,2,\ldots,M\}$, then
    $
    R_{\text{ALP}}(T,B) \leq \Theta^{(\circ)} \sqrt{T} + \frac{\tilde{\eta}_1 - \tilde{\eta}_M}{1 - e^{-2\kappa \delta^2}},
    $
    where $\Theta^{(\circ)} = (\tilde{\eta}_1 - \tilde{\eta}_M) \sum^T_{\tau=1} \sqrt\frac{\pi}{\kappa\tau}$ and 
    $\delta = \min\{\rho - Q_{\tilde{i}(\rho)-1}, Q_{\tilde{i}(\rho)+1} - \rho\}$.
\end{enumerate}
\end{theorem}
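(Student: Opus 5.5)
We follow the argument of \citet{ucb-alp}. We use Lemma~\ref{lemma:upperbound} as the benchmark and the explicit greedy solution of $\widetilde{\mathrm{LP}}$ from Appendix~\ref{app:solution}.

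\textbf{Step 1: write the regret as a sum of per-round LP gaps.} Let $v(\rho')$ denote the optimal value of the single-round LP with per-round budget $\rho'$. By the greedy structure, $v$ is concave and piecewise linear in $\rho'$. Its slope on $[Q_{i-1},Q_i]$ is $\tilde\eta_i$, so all slopes lie in $[\tilde\eta_M,\tilde\eta_1]$, and the kinks sit exactly at the $Q_i$.

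Conditioned on the state $(b,\tau)$ at a round, ALP samples from the LP solution. Its expected reward in that round is therefore exactly $v(b/\tau)$, and its expected cost is exactly $b/\tau$. Indexing rounds by the remaining horizon $\tau$ and writing $b_\tau$ for the remaining budget, Lemma~\ref{lemma:upperbound} then gives
\[
R_{\mathrm{ALP}}(T,B)\le \widehat U(T,B)-U_{\mathrm{ALP}}(T,B)=\sum_{\tau=1}^{T}\mathbb{E}\bigl[v(\rho)-v(b_\tau/\tau)\bigr].
\]
The same expected-cost identity shows that $b_\tau/\tau$ has conditional mean drift zero: $\mathbb{E}[b_{\tau-1}/(\tau-1)\mid b_\tau]=b_\tau/\tau$. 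This martingale property is what allows Lemma~12 of \citet{ucb-alp} (Azuma--Hoeffding) to give
\[
\mathbb{P}\{|b_\tau/\tau-\rho|\ge x\}\le 2e^{-\kappa x^2\tau}, \qquad \kappa=\rho^2/2 .
\]
Once the budget hits zero, only the null action is feasible. We fold this event into the deviation event, since it requires $b_\tau/\tau$ to have left a neighbourhood of $\rho$.

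\textbf{Step 2: non-boundary case.} Here $\rho$ lies strictly inside a linear piece of $v$, at distance at least $\delta$ from both adjacent kinks.

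On the event $|b_\tau/\tau-\rho|<\delta$, $v$ is affine between $\rho$ and $b_\tau/\tau$. The expected gap over this event is then a slope times the mean deviation. By the martingale property that mean deviation contributes zero in total; more precisely, the linear terms telescope across rounds and only the deviation event is left over.

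On the complementary event, the gap is at most $(\tilde\eta_1-\tilde\eta_M)$ in normalised units. Summing the resulting tail probabilities $e^{-2\kappa\delta^2\tau}$ over $\tau$ gives a geometric series, which yields the bound
\[
\frac{\tilde\eta_1-\tilde\eta_M}{1-e^{-2\kappa\delta^2}} .
\]

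\textbf{Step 3: boundary case.} Now $\rho=Q_{i(\rho)}$ sits on a kink, and the linear cancellation fails.

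Concavity together with the slope range gives
\[
v(\rho)-v(b_\tau/\tau)\le(\tilde\eta_1-\tilde\eta_M)\,|b_\tau/\tau-\rho| \;+\;\text{(linear term with zero mean)} .
\]
We bound $\mathbb{E}|b_\tau/\tau-\rho|=\int_0^\infty\mathbb{P}\{|b_\tau/\tau-\rho|>x\}\,dx$ using the Gaussian-type tail. This integral is at most a constant times $\sqrt{\pi/(\kappa\tau)}$. Summing over $\tau$ gives the $\Theta^{(\circ)}$ term, and since $\sum_\tau\tau^{-1/2}=O(\sqrt T)$, this term is $O(\sqrt T)$.

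Deviations that cross the next kinks $Q_{\tilde i(\rho)-1}$ or $Q_{\tilde i(\rho)+1}$ are handled exactly as in Step 2. This produces the second term with the stated $\delta$.

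\textbf{Main obstacle.} The delicate step is Step 1's martingale/telescoping argument. We must verify that the expected per-round cost equals $b/\tau$ exactly, including under budget truncation and the null action. We must also check that the Azuma--Hoeffding bound applies to the normalised process $b_\tau/\tau$ with increments of order $1/\tau$, so that the constant comes out as $\kappa=\rho^2/2$. I would first try to reproduce Lemma~12 of \citet{ucb-alp} verbatim, with the cost bounded in $[0,1]$.
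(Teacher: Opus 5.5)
Your proposal takes essentially the same route as the paper's proof. That proof is itself only a sketch deferring to Appendix~B.2 of \citet{ucb-alp}: write the regret as $\sum_{\tau}\bigl(v(\rho)-\mathbb{E}[v(b_\tau/\tau)]\bigr)$, use zero drift and the Azuma-type tail of $b_\tau/\tau$ (their Lemma~12), cancel the linear term when $\rho$ is interior to a linear piece of $v$ so only the exponentially unlikely threshold change $\tilde i(b_\tau/\tau)\neq\tilde i(\rho)$ contributes, and at a kink bound everything through $\mathbb{E}|b_\tau/\tau-\rho|$. The points to tighten are minor and mostly ones you already flag. First, the identity ``expected cost $=b_\tau/\tau$'' holds only while $0<b_\tau/\tau\le Q_M$. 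Second, the bound $\tilde\eta_1-\tilde\eta_M$ on the deviation event applies to the concavity gap $v(\rho)+s(x-\rho)-v(x)$, with $s$ the slope at $\rho$, not to the raw gap. Third, your tail $2e^{-\kappa x^2\tau}$ must be reconciled with the $e^{-2\kappa\delta^2\tau}$ you sum in Step~2 to recover the stated constant.
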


\textbf{Proof Sketch.} The proof for Theorem~\ref{thm:alp_regret} follows a similar structure as the unit-cost ALP regret proof in Appendix B.2 from~\cite{ucb-alp}. For the non-boundary cases, the single-round expected reward satisfies $\mathbb{E}[v(b/\tau)] = v(\rho)$ if the threshold $\tilde{i}(b/\tau) = \tilde{i}(\rho)$ for all possible vaules of $b$, where $v(\rho)$ is the optimal single-round value of $\mathrm{LP}_{T,B}$ at average budget $\rho = B/T$. The regret is then bounded by a constant because the probability of the event $\tilde{i}(b / \tau) \neq \tilde{i}(\rho)$ decays exponentially due to the concentration bound of $b$. 
For the boundary cases, we show the theorem by expressing the regret in terms of the expected absolute deviation of the average remaining budget  $\frac{b}{\tau}$ from the target value $\rho$, i.e., $\mathbb{E}[|\frac{b}{\tau} - \rho|]$. \qed 
\section{Solution Formulation for $\mathrm{LP}_{T,B}$}
\label{app:solution}
In this section, we first decompose $\mathrm{LP}_{T,B}$ into subproblem $\mathrm{SP}_j$ which is a constrained problem on a single context $j$. Secondly, we show some actions can be deleted without affecting the performance in $\mathrm{LP}_{T,B}$.

\begin{lemma}
\label{lemma: remove_action}
For any given $\rho_j \ge 0$, there exists an optimal solution of $\mathrm{SP}_j$, 
i.e., $\mathbf{p}_j^* = (p_{j,1}^*, p_{j,2}^*, \ldots, p_{j,K_j}^*)$, which satisfies:
\begin{enumerate}
    \item For $k_1$, if there exists another action $k_2$ such that 
    $\eta_{j,k_1} \le \eta_{j,k_2}$ and $u_{j,k_1} \le u_{j,k_2}$, 
    then $p_{j,k_1}^* = 0$;
    \item For $k_1$, if there exist two actions $k_2$ and $k_3$ such that 
    $\eta_{j,k_2} \le \eta_{j,k_1} \le \eta_{j,k_3}$, 
    $u_{j,k_2} \ge u_{j,k_1} \ge u_{j,k_3}$, and 
    $
    \frac{u_{j,k_1} - u_{j,k_3}}{c_{j,k_1} - c_{j,k_3}}
    \le 
    \frac{u_{j,k_2} - u_{j,k_3}}{c_{j,k_2} - c_{j,k_3}},
    $
    then $p_{j,k_1}^* = 0$.
\end{enumerate}
\end{lemma}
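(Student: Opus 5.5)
The plan is an exchange argument on the single-context subproblem $\mathrm{SP}_j$: maximize $\sum_k p_{j,k}u_{j,k}$ subject to $\sum_k p_{j,k}c_{j,k}\le\rho_j$ and $\sum_k p_{j,k}\le 1$, with the null action $k=0$ absorbing any leftover probability. Start from any optimal solution $\mathbf p_j^*$; one exists because the feasible set is a compact polytope. If $p^*_{j,k_1}>0$ for an action covered by condition 1 or 2, move its mass to the dominating action(s) so that feasibility is kept and the objective does not decrease. Apply this to every such $k_1$ in turn. Each move strictly reduces the support, so finitely many steps give an optimal solution with $p^*_{j,k_1}=0$ for all removable $k_1$.

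\textbf{Case 1.} Here $\eta_{j,k_1}\le\eta_{j,k_2}$ and $u_{j,k_1}\le u_{j,k_2}$. Write $m=p_{j,k_1}$, and take the case $c_{j,k_1}\ge c_{j,k_2}$ first. Shift all of the mass: set $p_{j,k_2}\mathrel{+}=m$ and $p_{j,k_1}=0$. Then the cost does not increase, the total probability is unchanged, and the reward changes by $m(u_{j,k_2}-u_{j,k_1})\ge 0$.

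If instead $c_{j,k_1}<c_{j,k_2}$, shift only part of the mass. Set $p_{j,k_1}=0$, $p_{j,k_2}\mathrel{+}=m\,c_{j,k_1}/c_{j,k_2}$, and put the rest, $m(1-c_{j,k_1}/c_{j,k_2})$, on the null action. The cost is then exactly preserved and total probability is still at most $1$. The reward changes by $m\,c_{j,k_1}(\eta_{j,k_2}-\eta_{j,k_1})\ge 0$.

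\textbf{Case 2.} Here $c_{j,k_3}<c_{j,k_1}<c_{j,k_2}$; this ordering should follow from the $\eta$ and $u$ orderings, and degenerate ties are handled as in Case 1. Write $c_{j,k_1}=\lambda c_{j,k_2}+(1-\lambda)c_{j,k_3}$ with $\lambda\in[0,1]$. Replace the mass $m$ on $k_1$ by $\lambda m$ on $k_2$ and $(1-\lambda)m$ on $k_3$. This keeps both the cost and the probability sum unchanged. The reward change is
\[
m\bigl(\lambda u_{j,k_2}+(1-\lambda)u_{j,k_3}-u_{j,k_1}\bigr)
= m(c_{j,k_1}-c_{j,k_3})\Bigl(\tfrac{u_{j,k_2}-u_{j,k_3}}{c_{j,k_2}-c_{j,k_3}}-\tfrac{u_{j,k_1}-u_{j,k_3}}{c_{j,k_1}-c_{j,k_3}}\Bigr)\ge 0,
\]
and the inequality is exactly the hypothesis.

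\textbf{Main obstacle.} The hard step is the sign bookkeeping in Case 2. I first need to confirm that the $\eta$ and $u$ orderings force the cost ordering $c_{j,k_3}\le c_{j,k_1}\le c_{j,k_2}$, so that $\lambda\in[0,1]$ and the slope comparison carries the right sign. I plan to do this by deriving the inequalities $c=u/\eta$ directly from the hypotheses, and treating any remaining cases by reducing them to Case 1.
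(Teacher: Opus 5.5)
The paper gives no proof of its own here (it defers to the ALP reference), and your exchange moves are the standard ones and are computed correctly. The Case 2 cost ordering you flag is immediate. If $u_{j,k_1}>0$, all three rates are positive and $c_{j,k_2}=u_{j,k_2}/\eta_{j,k_2}\ge u_{j,k_1}/\eta_{j,k_1}=c_{j,k_1}\ge u_{j,k_3}/\eta_{j,k_3}=c_{j,k_3}$. If $u_{j,k_1}=0$, its mass can simply be sent to the null action.

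The real gap is the assembly step. Your claim that each move strictly reduces the support is false. A Case 1 move can bring $k_2$ into the support, and a Case 2 move can bring in both $k_2$ and $k_3$. These receiving actions may themselves be removable, and when the hypotheses hold with equality the moves leave the objective unchanged, so nothing forces the process to stop. For example, take four actions with $(c,u)=(0.2,0.2),(0.4,0.3),(0.6,0.4),(0.8,0.5)$, all on the line $u=0.5c+0.1$, and call them $k_3,k_1,k_2,k_4$. Then $k_1$ is removable by condition 2 with witnesses $(k_2,k_3)$, and $k_2$ is removable with witnesses $(k_4,k_1)$. Clearing $k_1$ puts half of its mass on $k_2$; clearing $k_2$ puts half of its mass back on $k_1$. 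The mass on $\{k_1,k_2\}$ halves at each step but never vanishes in finitely many steps. Since the lemma asks for a single optimal solution that vanishes on all removable actions at once, you need a selection or termination argument.

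A clean repair goes through a potential. Each move sends an optimal solution to a feasible one with no smaller objective, hence to another optimal solution. Under the paper's standing assumption $c_{j,k_1}\neq c_{j,k_2}$, the potential $\Phi(\mathbf p)=\sum_k p_{j,k}\sqrt{c_{j,k}}$ strictly decreases under every move (as it does when mass is sent to the null action):
\begin{itemize}
\item the full shift in Case 1 replaces $m\sqrt{c_{j,k_1}}$ by $m\sqrt{c_{j,k_2}}$ with $c_{j,k_2}<c_{j,k_1}$;
\item the partial shift replaces it by $m\,c_{j,k_1}/\sqrt{c_{j,k_2}}<m\sqrt{c_{j,k_1}}$;
\item in Case 2, strict concavity of the square root with $\lambda\in(0,1)$ gives $\lambda\sqrt{c_{j,k_2}}+(1-\lambda)\sqrt{c_{j,k_3}}<\sqrt{c_{j,k_1}}$.
\end{itemize}
Now take an optimal solution minimizing $\Phi$ over the nonempty compact set of optimal solutions. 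If some removable $k_1$ carried positive mass, the corresponding move would produce an optimal solution with smaller $\Phi$, a contradiction.

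The distinct-cost assumption is genuinely needed and should appear explicitly in your proof. With only two identical actions of positive reward, each satisfies condition 1 relative to the other. The lemma would then force both to zero, which is suboptimal whenever $\rho_j>0$.
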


Intuitively, the first part of Lemma~\ref{lemma: remove_action} shows that if an action has small normalized and original expected reward, then it can be removed. 
The second part of Lemma~\ref{lemma: remove_action} shows that if an action has small normalized expected reward and medium original expected reward, but the increasing rate is smaller than another action with larger expected reward, then it can also be removed. Detailed proof can be found in~\cite{ucb-alp}

With probabilities $p_{j,k}\in[0,1]$ of choosing action $k$ under context $j$, the oracle upper bound
is the optimal value $\hat v(\rho)$ of
\begin{align}
\max_{\{p_{j,k}\}}~~
&\sum_{j=1}^{J}\pi_j\sum_{k}p_{j,k}\,u_{j,k}
\label{eq:hetero-lp-obj}\\
\text{s.t.}\quad
&\sum_{j=1}^{J}\pi_j\sum_{k}p_{j,k}\,c_{j,k}\le \rho,\qquad\\
&\sum_{k}p_{j,k}\le 1~(\forall j),\qquad \\
&p_{j,k}\in[0,1].
\label{eq:hetero-lp-constraints}
\end{align}
The intra-context constraint $\sum_k p_{j,k}\le 1$ enforces the probabilities of taking action $k$ under context $j$ forms a probability distribution.

The constraints can be decoupled by first allocating budget for each context, and then solving a subproblem with the allocated budget constraint for each context. Specifically, let $\rho_j$ be the budget allocated to context $j$, then $\mathcal{LP}_{T,B}$ can be decomposed as follows:
\begin{align*}
\max_{\{\rho_j\}} \quad & \sum_{j=1}^{J} \pi_j \hat{v}_j(\rho_j), \\
\text{s.t.} \quad & \sum_{j=1}^{J} \pi_j \rho_j \le \frac{B}{T},
\end{align*}
where
\begin{align}
(\text{SP}_j) \quad \hat{v}_j(\rho_j) = \max_{\{p_{j,k}\}} & \sum_{k=1}^{K} p_{j,k} u_{j,k}, \\
\text{s.t.} \quad & \sum_{k=1}^{K} p_{j,k} c_{j,k} \le \rho_j. \label{eq:subproblem} \\
&\sum_k p_{j,k} \le 1\\
&p_{j,k} \in [0,1]
\end{align}

After we introduce the transformations of $\tilde{p}, \tilde{u}$, and $\tilde{c}$, 

For each $j$, sort actions by $\eta_{j,k}=u_{j,k}/c_{j,k}$ in descending order and introduce the transformations of $\tilde{p}, \tilde{u}$, and $\tilde{c}$. 
Let $\tilde{\eta}_{j,k_{j_a}} = \tilde{u}_{j,k_{j_a}} / \tilde{c}_{j,k_{j_a}}$ be the normalized expected reward of virtual action $k_{j_a}$. 
For $a = 1$, using $\frac{u_{j,k_{j_1}}}{c_{j,k_{j_1}}} \ge \frac{u_{j,k_{j_2}}}{c_{j,k_{j_2}}}$, we can show that 
$\tilde{\eta}_{j,k_{j_1}} \ge \tilde{\eta}_{j,k_{j_2}}$. 
For $2 \le a \le K_j - 1$, using 
$
\frac{u_{j,k_{j_a}} - u_{j,k_{j_{a-1}}}}{c_{j,k_{j_a}} - c_{j,k_{j_{a-1}}}} 
\ge 
\frac{u_{j,k_{j_{a+1}}} - u_{j,k_{j_{a-1}}}}{c_{j,k_{j_{a+1}}} - c_{j,k_{j_{a-1}}}},
$
we can show that $\tilde{\eta}_{j,k_{j_a}} \ge \tilde{\eta}_{j,k_{j_{a+1}}}$. 
In other words, we can verify that 
$\tilde{\eta}_{j,k_{j_1}} \ge \tilde{\eta}_{j,k_{j_2}} \ge \cdots \ge \tilde{\eta}_{j,k_{j,K_j}}$.
Thus, without intra-context constraint, the optimal solution 
$\tilde{p}_j^* = [\tilde{p}_{j,k_1}^*, \tilde{p}_{j,k_2}^*, \ldots, \tilde{p}_{j,k_{K_j}}^*]$ 
automatically satisfies 
$\tilde{p}_{j,k_1}^* \ge \tilde{p}_{j,k_2}^* \ge \cdots \ge \tilde{p}_{j,k_{K_j}}^*$. 
Hence, we can remove the intra-context constraint, and thus decouple the probability constraint under a context.

The global LP reduces to a \emph{threshold} solution by greedy algorithm:
sorting all virtual pairs $(j,a)$ by $\tilde\eta$ yields the index $\tilde i(\rho)$ and cumulative masses
$Q_i$ specified in the main text; the solution fully takes indices $\le \tilde i(\rho)$, partially takes
$\tilde i(\rho)+1$, and rejects the rest.

\paragraph{Step 1: Solution formulation of $\mathrm{LP}_{T,B}$.} 

To simplify the LP solution, we first remove context-action pairs that do not contribute to the optimum~\cite{ucb-alp}. Define the reward rate $\eta_{j,k} = u_{j,k} / c_{j,k}$. For context $j$, assume that the remaining candidate set 
$\mathcal{A}_j = \{k_{j_1}, k_{j_2}, \ldots, k_{j_{K_j}}\}$ 
has been sorted in descending order of their reward rates $\eta_{j,k}$.
% $\mathcal{A}_j = \{k_{j,1}, k_{j,2}, \ldots, k_{j,K_j}\}$ 

We then introduce the following transformations:
\[
p_{j,k_{j_a}} =
\begin{cases}
\tilde{p}_{j,k_{j_a}} - \tilde{p}_{j,k_{j_{a+1}}}, & \text{if } 1 \le a \le K_j - 1, \\
\tilde{p}_{j,k_{j_{K_j}}}, & \text{if } a = K_j .
\end{cases}
\]
\[
\tilde{u}_{j,k_{j_a}} =
\begin{cases}
u_{j,k_{j_1}}, & \text{if } a = 1, \\
u_{j,k_{j_a}} - u_{j,k_{j_{a-1}}}, & \text{if } 2 \le a \le K_j ,
\end{cases}
\]
\[
\tilde{c}_{j,k_{j_a}} =
\begin{cases}
c_{j,k_{j_1}}, & \text{if } a = 1, \\
c_{j,k_{j_a}} - c_{j,k_{j_{a-1}}}, & \text{if } 2 \le a \le K_j .
\end{cases}
\]

By formulating the initial LP into the $\mathrm{\widetilde{LP}}_{T,B}$ problem with transformed variables $\tilde{p}$, $\tilde{c}$, and $\tilde{u}$, the intra-context constraint can be omitted, as it is implicitly satisfied after the transformation. This follows from the property of context–action pairs once those that never appear in the optimal solution are removed.
\begin{align*}
(\mathrm{\widetilde{LP}}_{T,B}) \;
&\max_{\tilde p_{j,k}} \quad 
 \sum_{j=1}^{J} \sum_{a=1}^{K_j} 
\pi_j \tilde{p}_{j,k_{j_a}} \, \tilde{u}_{j,k_{j_a}} \\
&\text{s.t.} \quad
\sum_{j=1}^{J} \sum_{a=1}^{K_j} 
\pi_j \tilde{p}_{j,k_{j_a}} \, \tilde{c}_{j,k_{j_a}} 
\le \tfrac{B}{T} \\
&\tilde{p}_{j,k_{j_a}} \in [0,1], 
 \forall j, \text{and } 1 \le a \le K_j 
\end{align*}

After removing the intra-context constraint, all pairs $(j, k_{j_a})$ can be ordered in descending order of their expected reward rate, defined as $\tilde{\eta}_{j,k_{j_a}} = \tilde{u}_{j,k_{j_a}}/\tilde{c}_{j,k_{j_a}}$.
The resulting sequence is denoted by their indices as $\{(j^{(1)}, k^{(1)}), \ldots, (j^{(M)}, k^{(M)})\}$. Let $\rho=B/T$ and $\tilde{i}(\rho) = \max \left\{ i : \sum_{i'=1}^{i} \pi_{j^{(i')}} \tilde{c}_{j^{(i')}, k^{(i')}} \le \rho \right\}$, which represents a threshold index with cumulative cost mass under $\rho$. The optimal solution of $\mathrm{\widetilde{LP}}_{T,B}$ can be expressed as following: 
\[
\tilde{p}_{j^{(i)},k^{(i)}}(\rho) =
\begin{cases}
1, & \text{if } 1 \le i \le \tilde{i}(\rho), \\[4pt]
\dfrac{
\rho - \sum_{i'=1}^{\tilde{i}(\rho)} 
\pi_{j^{(i')}} \tilde{c}_{j^{(i')}, k^{(i')}}}
{\pi_{j^{(\tilde{i}(\rho)+1)}} 
\tilde{c}_{j^{(\tilde{i}(\rho)+1)}, k^{(\tilde{i}(\rho)+1)}}},
& \text{if } i = \tilde{i}(\rho) + 1, \\[6pt]
0, & \text{if } i > \tilde{i}(\rho) + 1 .
\end{cases}
\]

The optimal solution of $\mathrm{LP}_{T,B}$ can be calculated by reversing the transformation.

\section{Proof of Lemma~\ref{lem:sufficient_explore}}
\label{app:sufficient_explore}

\begin{proof}

\textbf{Lowerbound of context-action pair.} We first analyze the number of executions for each context-action pair $(j,k)$ in the exploration stage $\epsilon(T)$. Let $N_j = \sum_{t=1}^{\epsilon(T)}\mathbbm{1}(X_t = j)$ be the number of occurences of context j up to round $\epsilon(T)$. Since each context $X_t$ arrives i.i.d. at each timestep, we can use Hoeffding's Inequality for each context $j$.
\begin{align*}
& \mathbb{P}{\Bigl[ \forall j \in \mathcal{X},  
N_j  \geq (1-\delta)\pi_j \epsilon(T) }\Bigr] \\
&\geq 1 - \sum_{j=1}^J \mathbb{P}\Bigl[ N_j < (1-\delta)\pi_j \epsilon(T) \Bigr] \nonumber \\
&\geq 1 - J e^{-2\delta^2 \epsilon(T)} \nonumber \\
&\geq 1 - J e^{-2\log T} \nonumber \\
&= 1 - J T^{-2}
\end{align*}

From Lemma 1, we have the lower bound $(1 - \delta)\pi_j \epsilon(T) \geq K + \frac{128 K \log T}{L^2}$. 
From the implementation of the exploration stage in Algorithm~\ref{alg:eps-first-alp}, we know that if 
$N_j \geq (1 - \delta)\pi_j \epsilon(T)$, then
\begin{align}
\label{eq:n_min}
N_{j,k}
&\ge \Big\lfloor 1 + \frac{128\,\log T}{L^2} \Big\rfloor \ge \frac{128\,\log T}{L^2}, \qquad \forall k\in\mathcal A. 
\end{align}

Therefore,
\begin{align*}
\mathbb{P}\Bigl[\forall j\in\mathcal X,\,\forall k\in\mathcal A:\;
&N_{j,k} \ge \tfrac{128\,\log T}{L^2}\Bigr]\ge 1 - JT^{-2}
\end{align*}

\textbf{Condition on observing sign flip.} Now, we want to investigate, at the end of exploration stage, the probability of having the order $\bar\xi_{j_1, k_{11}, k_{12}} \geq\bar\xi_{j_2, k_{21}, k_{22}}$  while the ground truth order is $\xi_{j_1, k_{11}, k_{12}} < \xi_{j_2, k_{21}, k_{22}}$. In another word, we want to show the probability of having such flipped order is small.
Define the utility and cost differences
\begin{align*}
\Delta u_i &:= u_{j_i,k_{i1}} - u_{j_i,k_{i2}}, \quad 
\Delta c_i := c_{j_i,k_{i1}} - c_{j_i,k_{i2}}, \quad i = 1,2,
\end{align*}
with empirical counterparts $\Delta\bar{u}_i := \bar{u}_{j_i,k_{i1}} - \bar{u}_{j_i,k_{i2}}$ and $\Delta\bar{c}_i := \bar{c}_{j_i,k_{i1}} - \bar{c}_{j_i,k_{i2}}$. In order to apply the Hoefdding's inequality, define the estimation errors
\[
\delta u_i := \Delta\bar{u}_i - \Delta u_i, \quad \delta c_i := \Delta\bar{c}_i - \Delta c_i.
\]

To avoid working with Hoeffding's inequality on ratios, we apply with the cross-products and define
\begin{align}
Z &:= \Delta u_1 \Delta c_2 - \Delta u_2 \Delta c_1 = (\xi_1 - \xi_2)\Delta c_1 \Delta c_2, \label{eq:Z_def}\\
\bar{Z} &:= \Delta\bar{u}_1 \Delta\bar{c}_2 - \Delta\bar{u}_2 \Delta\bar{c}_1. \label{eq:Zbar_def}
\end{align}

Using the triangle inequality together with $u,\bar u\in[0,1]$ and $c, \bar c \in (0,1]$ gives
\begin{align*}
|\bar Z-Z|
\le &
\bigl(|\delta u_1|+|\delta u_2|\bigr) 
+\bigl(|\delta c_1|+|\delta c_2|\bigr) \\
& +|\delta u_1\delta c_2|+|\delta u_2\delta c_1|.
\end{align*}
If $\xi_{j_1, k_{11}, k_{12}} < \xi_{j_2, k_{21}, k_{22}}$, we have $|Z| = |\xi_{j_2, k_{21}, k_{22}} - \xi_{j_1, k_{11}, k_{12}} ||\Delta c_1||\Delta c_2| \geq \Delta^{\xi}_{min}(\Delta^{c}_{min})^2 = L$. $|\bar Z - Z| <|Z|$ should be a sufficient condition to guarantee no flip between ground truth and estimation. Therefore, we need to bound $|\bar Z - Z|$ with  $|\bar Z - Z| < \Delta^{\xi}_{min}(\Delta^{c}_{min})^2$. 

\textbf{Concentration of empirical estimates.}
For each context-action pair $(j,k)$, let $\bar{u}_{j,k}$ and $\bar{c}_{j,k}$ denote the sample means of utility and cost from $N_{j,k}$ i.i.d.\ observations. Since $u_{j,k} \in [0,1]$ and $c_{j,k} \in [0,1]$ for each observation $s$, Hoeffding's inequality gives
\begin{align}
\mathbb{P}\bigl\{|\bar{u}_{j,k} - u_{j,k}| \geq t\bigr\} &\leq 2\exp(-2N_{j,k}t^2), \label{eq:hoeff_u}\\
\mathbb{P}\bigl\{|\bar{c}_{j,k} - c_{j,k}| \geq t\bigr\} &\leq 2\exp(-2N_{j,k}t^2). \label{eq:hoeff_c}
\end{align}

Set the threshold
\begin{equation}
t := \frac{L}{8\sqrt{2}}.
\label{eq:threshold}
\end{equation}
Using $N_{j,k} \geq \frac{128\log T}{L^2}$ from \eqref{eq:n_min}:
\begin{align*}
2\exp(-2N_{j,k}t^2) 
&\leq 2\exp\left(-2 \cdot \frac{128\log T}{L^2} \cdot \frac{L^2}{128}\right) \\
&= 2\exp(-2\log T) = 2T^{-2}.
\end{align*}

For a ground truth order $\xi_{j_1,k_{11},k_{12}} < \xi_{j_2,k_{21},k_{22}}$, the event $\bar{\xi}_{j_1,k_{11},k_{12}} \geq \bar{\xi}_{j_2,k_{21},k_{22}}$ can only occur if some context $j$ has fewer than $(1-\delta)\pi_j\epsilon(T)$ occurrences or at least one of the $8$ pairs fails the concentration bound. Overall the total failure probability contributed is at most $JT^{-2} + 16T^{-2}$.

\textbf{Upper bound on $|\bar{Z} - Z|$.} If each single estimation error satisfies $|\bar{u}_{j,k} - u_{j,k}| < t$ and $|\bar{c}_{j,k} - c_{j,k}| < t$, we have
\[
|\delta u_i| \leq 2t, \quad |\delta c_i| \leq 2t, \quad i = 1,2.
\]
Since utilities and costs are both in $[0,1]$, the differences satisfy
\[
|\Delta u_i| \leq 1, \quad |\Delta c_i| \leq 1, \quad i = 1,2.
\]

Applying the triangle inequality:
\begin{align}
|\bar{Z} - Z| &\leq |\delta u_1||\Delta c_2| + |\delta u_2||\Delta c_1| + |\Delta u_1||\delta c_2| + |\Delta u_2||\delta c_1| \notag\\
&\quad + |\delta u_1||\delta c_2| + |\delta u_2||\delta c_1| \notag\\
&\leq 2t \cdot 1 + 2t \cdot 1 + 1 \cdot 2t + 1 \cdot 2t + 2t \cdot 2t + 2t \cdot 2t \notag\\
&= 8t + 8t^2 .
\label{eq:Zbar_Z_bound}
\end{align}

Since $|Z|\le 2$ (as $|\Delta u_i|\le 1$ and $|\Delta c_i|\le 1$) and $L \le |Z|$, substituting $t=\frac{L}{8\sqrt{2}}$ from \eqref{eq:threshold} yields:
\begin{align}
|\bar{Z} - Z| &\leq 8 \cdot \frac{L}{8\sqrt{2}} + 8 \cdot\left(\frac{L}{8\sqrt{2}}\right)^2 \notag\\
&= \frac{L}{\sqrt{2}} + \frac{L^2}{16} \notag \\
&\leq \frac{L}{\sqrt{2}} + \frac{2L}{16} \notag \\
&< L.
\label{eq:Zbar_Z_substituted}
\end{align}

Therefore:
\begin{equation}
|\bar{Z} - Z| < L \leq |Z|.
\label{eq:key_inequality}
\end{equation}

For all rankings to be correct, we need each individual estimate to concentrate and enough exploration have been made to each context-action pair:
\begin{align}
\mathbb{P}\{\text{all rankings correct}\} 
&\geq (1 - JT^{-2})(1 - 4JKT^{-2}) \notag\\
&\geq 1 - JT^{-2} - 4JKT^{-2} \notag\\
&= 1 - (4JK + J)T^{-2}.
\label{eq:global_ranking}
\end{align}
\end{proof}

\section{Confidence Level Test}
\label{app:clt}

We present the Confidence Level Test (CLT) for \efalp in Algorithm~\ref{alg:clt}. 
At any time $t$, the algorithm maintains empirical means $(\bar u_{j,k},\bar c_{j,k})$ and execution counts $N_{j,k}$ for each context--action pair.
Using these quantities, we form the empirical marginal reward-cost ratios
$\bar\xi_{j,k_1,k_2} := (\bar u_{j,k_1}-\bar u_{j,k_2})/(\bar c_{j,k_1}-\bar c_{j,k_2})$, and adopt a fixed cost ordering so the cost difference in denominator is positive. For two ratios $\bar\xi_{j_1,k_11,k_12}$ and $\bar\xi_{j_2,k_21,k_22}$ at each time step, CLT runs a confidence test that checks whether the estimated order is the same as the unknown actual order determined with sufficiently high confidence given the current sample counts.
Algorithm~\ref{alg:clt} iterates over all ordered pairs of ratios; if every comparison passes, the policy stops exploration and switches to exploitation using the current empirical statistics; otherwise it continues exploring.

\begin{algorithm}[h]
  \caption{Confidence Level Test for \efalp}
  \label{alg:clt}
  \begin{algorithmic}[1]
    \STATE {\bfseries Input:} confidence target $\delta$; empirical means $\{\bar u_{j,k}, \bar c_{j,k}\}$ and counts $\{N_{j,k}\}$
    \STATE {\bfseries Output:} $\texttt{flagSucc}$
    \STATE $H \leftarrow \log(2/\delta)$; $r_u(n) \leftarrow \sqrt{H/(2n)}$; $r_c(n) \leftarrow \sqrt{H/(2n)}$
    \STATE $\texttt{flagSucc} \leftarrow \texttt{true}$
    \FORALL{$(j_1,k_{11},k_{12},j_2,k_{21},k_{22})$ s.t.\ $k_{11}\neq k_{12}$, $k_{21}\neq k_{22}$, $\bar c_{j_1,k_{11}}>\bar c_{j_1,k_{12}}$, $\bar c_{j_2,k_{21}}>\bar c_{j_2,k_{22}}$}
      \STATE $\Delta\bar u_1 \leftarrow \bar u_{j_1,k_{11}}-\bar u_{j_1,k_{12}}$; $\Delta\bar c_1 \leftarrow \bar c_{j_1,k_{11}}-\bar c_{j_1,k_{12}}$
      \STATE $\Delta\bar u_2 \leftarrow \bar u_{j_2,k_{21}}-\bar u_{j_2,k_{22}}$; $\Delta\bar c_2 \leftarrow \bar c_{j_2,k_{21}}-\bar c_{j_2,k_{22}}$
      \STATE $\delta u_1 \leftarrow r_u(N_{j_1,k_{11}})+r_u(N_{j_1,k_{12}})$; $\delta c_1 \leftarrow r_c(N_{j_1,k_{11}})+r_c(N_{j_1,k_{12}})$
      \STATE $\delta u_2 \leftarrow r_u(N_{j_2,k_{21}})+r_u(N_{j_2,k_{22}})$; $\delta c_2 \leftarrow r_c(N_{j_2,k_{21}})+r_c(N_{j_2,k_{22}})$
      \STATE $\hat Z \leftarrow \Delta\bar u_1 \cdot \Delta\bar c_2 - \Delta\bar u_2 \cdot \Delta\bar c_1$
      \STATE $B \leftarrow \delta u_2\cdot(|\Delta\bar c_1|+\delta c_1) + \delta c_1 + \delta u_1\cdot(|\Delta\bar c_2|+\delta c_2) + \delta c_2 + \delta u_1\delta c_2 + \delta u_2\delta c_1$
      \IF{$|\hat Z| < 2B$}
        \STATE $\texttt{flagSucc} \leftarrow \texttt{false}$
        \RETURN $\texttt{flagSucc}$
      \ENDIF
    \ENDFOR
    \RETURN $\texttt{flagSucc}$
  \end{algorithmic}
\end{algorithm}
CLT is conservative in practice for two reasons. First, LLM pricing tiers place competing models close in cost, so the gaps $\Delta_{\min}^c$ are tiny; the test must accumulate enough samples to distinguish these nearly identical costs with high confidence, requiring far more exploration than is practical. Second, the test must pass for every one of the $O(J^2K^4)$ ratio pairs, so a single hard-to-distinguish pair blocks the switch to exploitation even when all others are well-resolved.
Crucially, CLT is only a \emph{sufficient} condition: the exploitation phase performs well whenever the estimated ordering is approximately correct, even if the test never formally passes. We therefore use a fixed exploration schedule in our experiments.

\section{Experimental Details}
\label{app:exp_details}

\subsection{Dataset}
\label{app:dataset}

\textbf{RouterBench}
RouterBench contains $36,497$ unique queries each paired with responses from 11 different language models of varying sizes and capabilities. In our experiment,we take 10 models from this dataset and their accuracy and cost are shown in Table~\ref{tab:routerbench_performance_cost}. The queries are drawn from eight diverse datasets(MMLU, HellaSwag, GSM8k, ARC Challenge, Winogrande, MBPP, MT-Bench, and RAG).

\begin{table}[h!]
\centering
\caption{Average performance and cost per query across different models on RouterBench.}
\label{tab:routerbench_performance_cost}
\begin{tabular}{lcc}
\hline
Model & Avg Perf. & Avg Cost (\$) \\
\hline
gpt-4-1106-preview          & 0.805 & 7.943e-03 \\
mixtral-8x7b-chat & 0.650 & 4.140e-04 \\
claude-v1                   & 0.648 & 5.870e-03 \\
llama-2-70b-chat       & 0.606 & 1.337e-03 \\
claude-instant-v1           & 0.590 & 1.236e-03 \\
gpt-3.5-turbo-1106          & 0.687 & 7.086e-04 \\
WizardLM-13B-V1.2  & 0.539 & 1.417e-04 \\
code-llama-instruct-34b-chat & 0.504 & 5.499e-04 \\
claude-v2                   & 0.512 & 6.153e-03 \\
mistral-7b-chat   & 0.500 & 1.388e-04 \\
\hline
\end{tabular}
\end{table}

\textbf{SWE-Bench}
For the SWE-Bench dataset, we use its verified subset. We analyze the performance and instance cost of $5$ models downloaded from the benchmark's official repository. Specifically, we include the following models 
\begin{itemize}
    \item \textsc{20250511\_sweagent\_lm\_32b}, referred to as sweagent-lm-32b 
    \item \textsc{20240402\_sweagent\_gpt4\_performance}, referred to as sweagent-gpt-4
    \item \textsc{20240620\_sweagent\_claude3.5sonnet }, referred to as sweagent-claude-3.5-sonnet
    \item \textsc{20240728\_sweagent\_gpt4o}, referred to as sweagent-gpt-4o 
    \item \textsc{20250522\_sweagent\_claude-4-sonnet-20250514}, referred to as sweagent-claude-4-sonnet
\end{itemize}
Those $5$ models' average successful rate and instance cost is shown in Table~\ref{tab:swebench_performance_cost}.

Both datasets are split into training and test sets (50:50) using a fixed random seed for reproducibility, with the same split used across all methods.

\begin{table}[ht]
\centering
\caption{Average performance and cost across different models on SWE-bench.}
\label{tab:swebench_performance_cost}
\begin{tabular}{lcc}
\hline
Model & Avg Perf. & Avg Cost (\$) \\
\hline
sweagent-claude-4-sonnet    & 0.666 & 1.217 \\
sweagent-lm-32b             & 0.402 & 1.039 \\
sweagent-claude-3.5-sonnet  & 0.336 & 1.714 \\
sweagent-gpt-4o             & 0.232 & 2.317 \\
sweagent-gpt-4              & 0.224 & 2.418 \\
\hline
\end{tabular}
\end{table}

\subsection{Baseline Implementation}
\label{app:baseline}

We provide the details of our implementation for each baseline methods:
\begin{itemize}
    % \item \textbf{StaticLP}: Decisions are computed using $( u_{j,k},  c_{j,k})$ at test time without adapting to the remaining budget. Use the same cluster and context-action pair statistics as \alp.
    
    \item  \textbf{FrugalGPT}: A cascading method with a per-query budget constraint.  To control training time latency, we cap the cascade length to be $2$, all other setting uses the default implementation. The cascade parameters are tuned on the training split to meet the target average budget $\rho$ on test set.
    \item \textbf{CascadeRouting}: We implement the method following the original paper, restricted to a cascade length of 2. The estimation strategy varies by dataset: For RouterBench, quality and cost estimators utilize ground-truth statistics with added noise. To ensure a fair comparison with baselines that lack ground-truth access, we adopt the High standard deviation setting as defined in the original work. For SWE-Bench, where we measure instance-level costs, we employ a linear regression model to estimate cost via query embeddings and a logistic regression model to predict quality.
    \item \textbf{SingleBest}: For each cluster $j$, we select the action $k^{\star}(j) = \arg\max_{k:\,c_{j,k}\leq \rho} u_{j,k}$,the action with the highest estimated reward subject to the per-query budget $\rho$. At inference, the router always chooses $k^{\star}(j)$ for queries in cluster $j$. Budget is enforced only through the fixed average constraint $\rho = B/T$, without adaptation over time. Use the same cluster and context-action pair statistics as \alp. Since K-means clustering depends on random centroid initialization, we run five independent seeds ($0$--$4$) and report the mean and standard deviation over the resulting cluster assignments. 
    \item \textbf{MetaLLM}: We adapt the algorithm from the official implementation, using the same cluster and context-action pair statistics as in other baselines. Actions are selected by UCB with a Lagrangian penalty $p \geq 0$ in the reward function to enforce budget adherence. For each budget level, $p$ is tuned via grid search on a held-out validation set (10\% of training data), selecting the value whose total validation cost is closest to the proportionally scaled budget without exceeding it. Results are averaged over 5 random seeds ($0$--$4$), each controlling action tie-breaking and K-means initialization.
    % The router selects action by UCB algorithm, with $p \geq 0$ a Lagrangian cost-penalty parameter built in reward function to control budget adherence. For each budget level, $p$ is chosen via grid search on a held-out validation set (10\% of data), selecting the value whose total validation cost comes closest to the proportionally scaled budget without exceeding it. Results are averaged over 5 random seeds, each controlling the action sampling and the K-means initialisation.

    % \item \textbf{\alp}: 
\end{itemize}

\subsection{Running Environment}
\label{app:exp_setup}
Our FrugalGPT experiments were run as Slurm jobs on a Linux GPU node, requesting one NVIDIA H100, while all other baselines were run on Slurm CPU nodes with 4 CPU tasks and 52GB host memory.

\subsection{Intra-Cluster Variance.}
\label{app:cluster_var}
The intra-cluster homogeneity assumption (Section~\ref{subsec: clustering}) treats
all queries in a cluster as sharing the same expected reward and cost for each model.
Table~\ref{tab:within_cluster_variance} reports the average intra-cluster variance
of reward and normalized cost at $J=16$.
Cost variance is negligible on RouterBench and modest on SWE-Bench, confirming that
cost discretization introduces little approximation error.
% Reward variance reflects genuine per-query utility variation rather than a failure of
% discretization: cluster-level reward estimates correlate with ground-truth statistics
% at Pearson $r=0.997$ (RouterBench) and $r=0.896$ (SWE-Bench), indicating that
% the discretization retains the information needed for routing decisions.

\begin{table}[ht]
\centering
\caption{Average intra-cluster variance of reward and normalized cost at $J=16$.}
\label{tab:within_cluster_variance}
\begin{tabular}{lcc}
\toprule
Dataset      & Avg.\ Reward Variance & Avg.\ Normalized Cost Variance \\
\midrule
RouterBench  & $0.192$               & $0.00049$ \\
SWE-Bench    & $0.199$               & $0.065$   \\
\bottomrule
\end{tabular}
\end{table}

\subsection{\alp Experiment Result on SWE-Bench}
We provide the performance comparison between \efalp and other offline baselines under end-to-end budget in Figure~\ref{fig:epsalp_combined_swebench}.
\label{online-swe}

\begin{figure}
    \centering
    \includegraphics[width=0.6\linewidth]{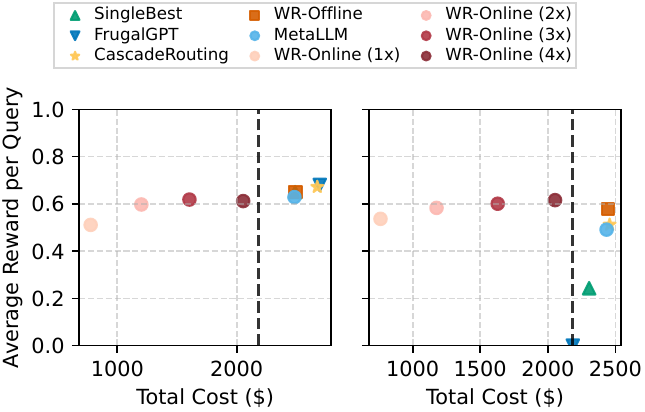}
    \caption{Performance comparison under end-to-end budget constraints on SWE-Bench. Comparison between \efalp and offline baselines. Vertical dashed lines represent the fixed cost of collecting the complete training set for offline methods. \optiroute values are averaged over five runs.}
    \label{fig:epsalp_combined_swebench}
\end{figure}

\subsection{Exploration Estimation Accuracy}
\label{app:estimation_accuracy}

To validate that the exploration phase of \efalp produces reliable statistics for
the subsequent LP routing policy, we compare the empirical estimates $\bar{u}_{j,k}$
and $\bar{c}_{j,k}$ at the end of exploration against ground truth statistics
computed over the full horizon $T$ ($J=16$, exploration length $4\times$ training set).
Table~\ref{tab:estimation_accuracy} reports mean absolute error (MAE) and Pearson
correlation across all context--action pairs $(j,k)$.
Both datasets show high reward correlation, confirming that exploration recovers the
relative model ordering required by the LP.
Cost estimates are nearly exact on RouterBench and accurate on SWE-Bench, where
higher per-instance cost variance makes estimation harder.

\begin{table}[ht]
\centering
\caption{Estimation accuracy of \efalp exploration phase vs.\ ground truth statistics ($J=16$, normalized cost $S_t \in [0,1]$).}
\label{tab:estimation_accuracy}
\begin{tabular}{lcccc}
\toprule
         & \multicolumn{2}{c}{Reward $\bar{u}_{j,k}$} & \multicolumn{2}{c}{Cost $\bar{c}_{j,k}$} \\
\cmidrule(lr){2-3} \cmidrule(lr){4-5}
Dataset      & MAE    & Pearson $r$ & MAE       & Pearson $r$ \\
\midrule
RouterBench  & $0.011$ & $0.997$    & $2.72\times10^{-4}$ & $1.000$ \\
SWE-Bench    & $0.076$ & $0.896$    & $0.041$             & $0.941$ \\
\bottomrule
\end{tabular}
\end{table}

\subsection{Ablation Study}
\label{app:ablation}

Here we present the hyperparameter tuning and ablation study result performed on the RouterBench dataset. All experiment results are conducted on RouterBench train set with cross validation and the validation set ratio is 20\%. Each experiment is runned 5 times and compute the average score. We first examine the sensitivity of our embedding method generated from a commercial API OpenAI’s \textsc{text-embedding-3-small} and an opensource model \textsc{Qwen3-Embedding-0.6B}, Table~\ref{tab:embedding_pca_ablation} shows that the result on validation set over 5 runs is relatively stable, suggesting our method is robust towards the embedding methods and vector dimension choice.

\begin{table}[ht]
\centering
\caption{Average reward across different embedding methods and PCA settings (3285 queries, 5 seeds).}
\label{tab:embedding_pca_ablation}
\begin{tabular}{lcc}
\hline
Embedding & No PCA & PCA (256 dim) \\
\hline
Qwen3-Embedding-0.6B   & 2951.39 {$\pm$ 22.83} & 2951.84 { $\pm$ 21.27} \\
OpenAI-text-embedding-3-small & 2949.65 {$\pm$ 18.74} & 2950.84 { $\pm$ 22.42} \\
\hline
\end{tabular}
\end{table}

We select $J=16$ as the number of clusters for our K-means clustering method. As shown in Table~\ref{tab:kmeans_selection}, all values of $J$ achieve similar average reward. However, $J=16$ attains the highest reward (0.8083) while having the smallest standard deviation (0.0029), indicating the most stable performance across runs.
\begin{table}[ht]
\centering
\caption{Hyperparameter selection for number of clusters $J$ in K-means. Performance is averaged over 5 runs.}
\label{tab:kmeans_selection}
\begin{tabular}{c|c}
\toprule
$J$ & Avg Reward per Query \\
\midrule
8  & $0.8083 \pm 0.0036$ \\
12 & $0.8079 \pm 0.0030$ \\
\textbf{16} & $\mathbf{0.8083 \pm 0.0029}$ \\
20 & $0.8082 \pm 0.0030$ \\
\bottomrule
\end{tabular}
\end{table}

% and DP-Means~\cite{dpmeans}
We also examined other density based clustering methods DBSCAN. Table~\ref{tab:dbscan_selection} shows results when replacing K-means with DBSCAN. Across all $\epsilon$ values we explored, the average reward remains around $0.808$, consistent with our K-means results. This indicates that our method is robust to the choice of clustering algorithm and its hyperparameters.

\begin{table}[ht]
\centering
\caption{Hyperparameter selection for DBSCAN $\epsilon$.}
\label{tab:dbscan_selection}
\begin{tabular}{c|c}
\toprule
$\epsilon$ & Avg Reward per Query \\
\midrule
0.2 & $0.8083 \pm 0.0036$ \\
0.4 & $0.8080 \pm 0.0029$ \\
0.6 & $0.8084 \pm 0.0035$ \\
0.8 & $0.8084 \pm 0.0035$ \\
\bottomrule
\end{tabular}
\end{table}

\subsection{Simulation Experiment}
\label{app:simulation}

\paragraph{Simulation Setup.}
% explain how we make other baseline method to be cost aware (adaptive avg cost)
% explain the difference between efalp-known and efalp-unknown
We consider a synthetic contextual decision-making environment with a finite horizon of $T = 1500$, $J = 11$ discrete contexts, and $K = 11$ available actions. At each round $t$, a context $j \in \{1,\dots,J\}$ is sampled independently from a fixed power-law distribution with parameter $\alpha = 1.0$, inducing a skewed frequency over contexts reflecting reality.

Each context--action pair $(j,k)$ is associated with a ground-truth expected reward $u_{j,k}$ and expected cost $c_{j,k}$. The reward matrix $U \in [0,1]^{J \times K}$ and cost matrix $C \in [0,1]^{J \times K}$ are generated once at initialization by sampling entries independently from a uniform distribution on $[0,1]$, and are held fixed throughout the experiment. These quantities define the expected reward and cost structure of the environment. Observed rewards and costs add noise to its expected value. Budget consumption is tracked using the expected cost $c_{j_t,k_t}$ to ensure that the workload constraint is applies upon the expected cumulative cost.
% At each round, the learner selects an action $k_t$ subject to a global workload-level budget constraint. 

\paragraph{Budget construction.}
To characterize the feasible budget range in the simulated environment, we compute reference workload-level budgets based on the mean cost structure.
Let $c_{j,k}$ denote the mean cost of action $k$ under context $j$, and let $p_j$ be the context arrival distribution.

The minimum per-step expected cost is defined as
$
\underline{c} = \sum_{j} p_j \min_{k} c_{j,k},
$
corresponding to always selecting the cheapest action for each context. Similarly, the maximum per-step expected cost is $\bar c =\sum_{j} p_j \max_{k} c_{j,k}$, which corresponds to always selecting the most expensive action.

For a horizon of $T = 1500$, these values induce a minimum feasible budget $B_{\min} = T \cdot \underline{c}$ and a maximum feasible budget $B_{\max} = T \cdot \overline{c}$. In our simulation, we obtain the mid-level budget $B_{\text{mid}} = \frac{1}{2}(B_{\min} + B_{\max}) = 762.58$,
which induces meaningful trade-offs between reward maximization and budget conservation and is therefore used throughout the simulation experiments.

\paragraph{Making Bandit Algorithms Budget-Aware.}
To enable a fair comparison, all baseline methods are adapted to respect the global budget constraint through an adaptive cost constraint. Specifically, at each round $t$, a baseline policy computes a per-round target cost $b/\tau,$
where $b$ is the remaining budget and $\tau$ is the remaining time step. The policy then restricts its action selection to those actions whose expected cost does not exceed $b/\tau$. If no such action exists, the policy defaults to the lowest-cost action. This mechanism ensures that even baselines that are not explicitly designed for constrained optimization are budget-aware and adapt their behavior over time as budget is consumed.

\paragraph{Compared Methods and Evaluation Protocol.}
We compare \efalp against several baselines modified with the budget aware mechanism described above, including a random policy, $\varepsilon$-greedy policy~\cite{SuttonRL}, and budget-aware LinUCB~\cite{linucb}, all equipped with the adaptive average cost mechanism described above. We additionally evaluate a variants of \efalp when it is provided with the true expected cost value $c_{j,k}$ for all $(j,k)$, we call such method EFALP-known (same as the $\epsilon$-first ALP proposed in~\cite{ucb-alp}). Both variants use identical reward observations and budget accounting, differing only in their access to cost information.
% while EFALP-unknown must estimate expected cost online from noisy cost observations during an initial exploration phase.

An oracle linear-programming policy with full knowledge of $(u_{j,k}, c_{j,k})$ is used to compute the optimal expected reward and serves as the benchmark for regret computation. To ensure a fair comparison, all policies are evaluated on identical context sequences for each random seed. Results are reported with mean and standard deviation over multiple runs.

%%%%%%%%%%%%%%%%%%%%%%%%%%%%%%%%%%%%%%%%%%%%%%%%%%%%%%%%%%%%

%\newpage
%\input{checklist.tex}

\end{document}